\newtheorem{theorem}{Theorem}
\newtheorem{definition}{Definition}
\newtheorem{corollary}{Corollary}
\newtheorem{lemma}{Lemma}
\DeclarePairedDelimiter\ceil{\lceil}{\rceil}
\DeclarePairedDelimiter\floor{\lfloor}{\rfloor}
\newcommand{\R}{\mathbb{R}}
\newcommand{\N}{\mathbb{N}}
\newcommand{\ra}{\rightarrow}
\newcommand{\Exp}[1]{\mathbb{E}\left[#1\right]} 
\newcommand{\ashutosh}[1]{\textcolor{orange}{#1}}
\newcommand{\alternate}[1]{\textcolor{purple}{#1}}
\newcommand{\B}{\mathcal{B}}
\newcommand{\SG}{\mathcal{SG}}
\newcommand{\SE}{\mathcal{SE}}
\newcommand{\G}{\mathcal{G}}
\newcommand{\M}{\mathcal{M}}
\newcommand{\U}{\mathcal{U}}
\newcommand{\W}{\mathcal{W}}
\newcommand{\jk}[1]{\ifthenelse{\boolean{showcomments}} {\textcolor{red}{(JN says: #1)}} {} }
\newcommand{\ka}[1]{\ifthenelse{\boolean{showcomments}} {\textcolor{red}{(KA says: #1)}} {} }
\newcommand{\kj}[1]{\ifthenelse{\boolean{showcomments}} {\textcolor{red}{(KJ says: #1)}} {} }
\newcommand{\ak}[1]{\ifthenelse{\boolean{showcomments}} {\textcolor{red}{(AK says: #1)}} {} }
\newcommand{\ignore}[1]{}
\title{Bandit algorithms: Letting go of logarithmic regret for
  statistical robustness}
\author{%
  Kumar Ashutosh \\
  Department of Electrical Engineering\\
  IIT Bombay, Mumbai \\
  \texttt{kumar.ashutosh@iitb.ac.in}
   \And
   Jayakrishnan Nair \\
   Department of Electrical Engineering \\
   IIT Bombay, Mumbai \\
   \texttt{jayakrishnan.nair@ee.iitb.ac.in}
   \AND
   Anmol Kagrecha \\
   Department of Electrical Engineering \\
   IIT Bombay, Mumbai  \\
   \texttt{akagrecha@gmail.com} 
   \And
   Krishna Jagannathan \\
   Department of Electrical Engineering \\
   IIT Madras, Chennai \\
   \texttt{krishnaj@ee.iitm.ac.in} 
}
\begin{document}

\maketitle

\begin{abstract}

  We study regret minimization in a stochastic multi-armed bandit
  setting, and establish a fundamental trade-off between the regret
  suffered under an algorithm, and its statistical robustness. Considering broad classes of underlying arms' distributions, we show
  that bandit learning algorithms with logarithmic regret are \emph{always
  inconsistent}, and that \emph{consistent} learning algorithms always suffer a super-logarithmic regret. This result highlights the inevitable statistical fragility of all `logarithmic
  regret' bandit algorithms available in the literature---for instance,
  if a UCB algorithm designed for $\sigma$-subGaussian distributions
  is used in a subGaussian setting with a mismatched variance
  parameter, the learning performance could be inconsistent. Next, we show a positive result: statistically robust and consistent
  learning performance is attainable if we allow the regret to be
  \emph{slightly worse} than logarithmic. Specifically, we propose three  classes of distribution oblivious algorithms that achieve an
  asymptotic regret that is arbitrarily close to logarithmic.
\end{abstract}

\section{Introduction}
\label{headings}


The stochastic multi-armed bandit (MAB) problem seeks to identify the best among an available basket of options (a.k.a., arms), each
characterized by an unknown probability distribution. Classically,
these probability distribution represent rewards, and the best arm is
defined as the one associated with the largest average reward. The
learning algorithm, which chooses (a.k.a., pulls) one arm per decision
epoch, identifies the best arm via experimentation---each pull of an
arm yields one sample from the underlying reward distribution. One
classical performance metric is \emph{regret}, which evaluates an
algorithm based on how often it pulls sub-optimal arms.

The standard approach towards algorithm design for regret minimization is as follows. First, it is assumed that the arm reward distributions belong to a specific parametric class---for example, the class of bounded distributions with support contained in $[0,b],$ or the class of $\sigma$-subGaussians. Next, algorithms are proposed for such specific parametric distribution classes, often making explicit use of the parameters (such as $b$ or $\sigma$) corresponding to the parametric distribution class. Finally, logarithmic regret guarantees are proved for such algorithms, by utilising exponential concentration inequalities (such as Hoeffding's inequality or sub-Gaussian concentration) for that parametric distribution class.

For distribution classes such as $\sigma$-subGaussians, a logarithmic regret guarantee may not be so surprising, because such distributions enjoy exponential concentration bounds. On the other hand, when  dealing with heavy-tailed arms' distributions, it is not clear that a logarithmic regret is achievable. This is because heavy-tailed distributions (such as Pareto) are characterised by a high degree of variability, and their empirical mean estimators do not enjoy exponential concentration in the sample size. Somewhat surprisingly, a logarithmic regret guarantee was shown to be attainable in \cite{bubeck} using a truncated mean estimator, for distributions satisfying a bounded moment condition. While this approach \cite{bubeck} can handle heavy-tailed as well as light-tailed distributions, the algorithm still needs to know the moment bounds.

As such, a logarithmic regret guarantee has been shown to hold in a broad range of stochastic bandit settings. At this point, it is perhaps not an exaggeration to suggest that a logarithmic regret is regarded as a `default performance expectation' from `good' stochastic bandit learning algorithms. The present paper challenges this perceived sanctity of logarithmic regret, in the context of low-regret learning of stochastic MABs. We show that bandit algorithms that enjoy a logarithmic regret guarantee cannot be statistically robust.



\textbf{Our contributions:} We make two key contributions in this paper.

First, we show that bandit algorithms that enjoy a logarithmic regret guarantee are fundamentally fragile from a statistical standpoint. Equivalently, we show that statistically robust algorithms necessarily incur super-logarithmic regret.  Here, an algorithm is said to be statistically robust if it exhibits \emph{consistency}, i.e., the regret scales slower than any power-law, over a suitably broad class of MAB instances.  

For example, consider an algorithm with logarithmic regret designed for $\sigma$-subGaussian arms. When this algorithm is used in a `mismatched' bandit instance, say with $\sigma'$-subGaussian arms ($\sigma' > \sigma$),  the learning performance can be \emph{inconsistent.} That is, the regret suffered by the algorithm in the mismatched instance could have a power-law scaling in the time horizon. This is of practical concern, since the parameters that define the space of arms' distributions (usually in the form of support/moment bounds) are often themselves estimated from limited data samples, and are therefore prone to errors.


Our second contribution is a positive result: we show that statistically robust learning is achievable if we are willing to tolerate a `slightly-worse-than-logarithmic' regret in the time horizon. Specifically, we propose three classes of algorithms that (i) are \emph{distribution oblivious} (i.e., they require no prior information about the arm distribution parameters), and (ii) incur a regret that is slightly
super-logarithmic. The first algorithm class offers this guarantee
over subexponential (a.k.a., light-tailed) instances. The
latter two are designed to work robustly for general distribution instances (excepting some pathological ones). 

In all three algorithms, the asymptotic regret guarantee is controlled by a certain slow-growing scaling function that is used to to define confidence bounds. A more slowly growing scaling function makes the regret asymptotically closer
to logarithmic, but at the expense of a potential degradation in
performance for shorter horizons. Furthermore, the regret for shorter
horizon-lengths can be improved by incorporating (noisy) prior
information about the reward distributions into the scaling function,
without compromising on statistical robustness.

\textbf{Related literature:} There is a vast literature on the
regret minimization for the stochastic MAB problem; we refer the
reader to the textbook treatments
\cite{Bubeck2012,lattimore2018bandit}. However, to the best of our
knowledge, the issue of statistical robustness and its connection to
logarithmic regret has not been explored before. 

We are aware of only two other works that address statistical
robustness in context of bandit algorithms, both of which consider the
fixed budget pure exploration setting. For the best arm identification
problem, stastistically robust algorithms have been demonstrated
recently in~\cite{kagrecha2019}. For thresholding bandit problem, the
algorithm proposed in \cite{Locatelli2016} is
\emph{distribution-free}, i.e., the algorithm does not require
knowledge of the $\sigma$ parameter defining the space of
$\sigma$-subGaussian rewards.

The remainder of this paper is organized as follows. We introduce some
preliminaries and define the MAB formulation in
Section~\ref{headings}. The trade-off between statistical robustness
and logarithmic regret is established in
Section~\ref{lower-bounds-section}. Our statistically robust
algorithms and their performance guarantees are presented in
Section~\ref{upper-bounds}, and we report the results of some
numerical experiments in Section~\ref{expt-analysis}. An appendix,
containing proofs of stated results, as well as details omitted from
the main body of the paper due to space constraints, is uploaded
separately as the `supplementary material' document.

\section{Model and Preliminaries}
\label{headings}

In this section, we introduce some preliminaries and formally define
the MAB formulation.

\subsection{Preliminaries}

We begin by introducing the classes of reward distributions we will
work with in this paper.
\begin{itemize}
\item $\B([a,b])$ denotes the set of bounded distributions with
  support contained in $[a, b]$. The set of all bounded distributions
  is denoted by $\B.$
\item We use $\SG(\sigma),$ for $\sigma > 0,$ to denote
  $\sigma$-subGaussian distributions, and $\SG$ to denote all
  subGaussian distributions.
\item We denote $\SE(v, \alpha),$ for $v, \alpha > 0,$ to denote the
  following class of subexponential distributions:
  $$ \SE(v, \alpha) = \left\{ F : \int e^{\lambda(x-\mu(F))} dF(x) \leq e^{ \frac{v^2 \lambda^2}{2}}
    \text{ for all } |\lambda| < \frac{1}{\alpha}\right\},$$ where
  $\mu(F)$ denotes the mean of~$F.$ The class of all subexponential
  distributions is denoted by $\SE.$ Distributions in $\SE$ are also
  commonly referred to as \emph{light-tailed}, and those not in $\SE$
  are called \emph{heavy-tailed} (see \cite{Foss2011}).
\item For $\epsilon,B > 0,$ let $\G(\epsilon,B)$ denote the set of
  distributions whose $(1+\epsilon)^{th}$ absolute moment is upper
  bounded by $B$, i.e.,
  $$\G(\epsilon,B) = \left\{F:\ \int |x|^{1+\epsilon} dF(x) \leq B \right\}.$$
  In the MAB literature, $\G(\epsilon,B)$ is often used as the class
  of reward distributions in order to allow for heavy-tailed rewards
  (see, for example, \cite{bubeck,Yu2018PureEO}). Finally, the union
  of the sets $\G(\epsilon,B)$ over $\epsilon,B > 0$ is denoted by
  $\G:$
  $$\G = \left\{F:\ \int |x|^{1+\epsilon} dF(x) < \infty \text{ for some } \epsilon > 0\right\}.$$
  $\G$ is the most general space of reward distributions one can work
  with in the context of the MAB problem---it contains all
  light-tailed distributions and most heavy-tailed distributions of
  interest.
\end{itemize}
Note that $\B \subset \SG \subset \SE \subset \G.$  We also recall the Kullback-Leibler divergence (or relative
entropy) between distributions $F$ and $F'$: \begin{align*}
    D(F, F') = \int \log \left( \frac{dF(x)}{dF'(x)} \right) dF(x),
\end{align*}
where $F$ is
absolutely continuous with respect to $F'.$ 

Much of the vast literature on MAB problems assumes that the
reward distributions lie in specific parametric subsets of $\B,$ $\SG,$ $\SE,$ or
$\G;$ for example $\B([0,1]),$ $\SG(1),$ $\G(1,B)$ etc. Further, the
parameter(s) corresponding to these subsets are `baked' into the
algorithms. While this approach guarantees strong performance over the parametric distribution subset under consideration
(logarithmic regret, in the classical regret minimization framework), it is highly fragile to uncertainty in these parameters. Indeed, as we demonstrate in Section~\ref{lower-bounds-section}, any algorithm that enjoys logarithmic regret for a parametric subset of a distribution class \emph{must be inconsistent} over the entire distribution class---specifically, when there is a parameter mismatch, the regret suffered could have a power-law scaling in the time horizon. In Section~\ref{upper-bounds}, we propose bandit algorithms that are statistically robust, but incur (slightly) superlogarithmic regret.


\subsection{Problem formulation}

Consider a multi-armed bandit (MAB) problem with $k$ arms. Let $\M$ be a distribution class (such as $\B,\SG$ etc.) An instance
$\nu = (\nu_i,\ 1 \leq i \leq k)$ of the MAB problem is defined as an element of $\M^k,$ where $\nu_i\in\M$ is the distribution corresponding to arm $i$. Let $\mu_i$ denote the mean reward
associated with arm~$i,$ i.e., $\mu_i$ is the expected value of a random variable distributed according to $\nu_i.$ An \emph{optimal} arm is an arm that maximizes
the mean reward, i.e., one whose mean reward equals
$\mu^{*} = \max_{1 \leq i \leq k} \mu_i.$ The \emph{sub-optimality
  gap} associated with arm~$i$ is defined as
$\Delta_i := \mu^* - \mu_i.$ 

In this paper, our goal is to minimize \emph{regret}. Formally, under
the a policy (a.k.a., algorithm)~$\pi,$ let~$T_i(n)$ denote the number
of times $i^{th}$ arm has been pulled after $n$ rounds. The regret
$R_n(\pi,\nu)$ associated with
the policy $\pi$ after~$n$ rounds is defined as
$$R_n(\pi,\nu)= \sum_{i = 1}^n \Delta_i \Exp{T_i(n)}.$$

An algorithm is said to be \textit{consistent}
over~$\M^k$ if, for all instances $\nu \in \M^k,$ the regret satisfies $R_n(\pi,\nu) = o(n^a)$ for
all $a > 0$ (see \citet{lattimore2018bandit}). For example, an algorithm that guarantees polylogarithmic regret over \emph{all} instances in $\M^k$ is consistent over~$\M^k.$ On the other hand, if an algorithm suffers $O(n^a)$ regret for some $a>0$ and \emph{some} instance in $\M^k,$ then the algorithm is inconsistent over~$\M^k.$


\section{Impossibility of logarithmic regret for statistically robust
  algorithms}
\label{lower-bounds-section}

In this section, we shed light on a fundamental conflict between
logarithmic regret and statistical robustness. Recall that in
classical MAB formulations, it is assumed that arm reward
distributions lie in, say $\B([0,b])$ or $\SG(\sigma).$ In such cases,
algorithms that exploit this parametric information (i.e., the value
of $b$ in the former case and the value of $\sigma$ in the latter) are
known that achieve $O(\log(n))$ regret, where $n$ denotes the
horizon. The celebrated UCB family of algorithms is a classic example
\citep{lattimore2018bandit}. In this section, we ask the question: Are
these algorithms robust with respect to the parametric information
`baked' into them?  Our main result of this section answers this
question in the negative. Specifically, we show that statistically
robust algorithms (i.e., algorithms that maintain consistency over an
entire class of distributions) necessarily incur super-logarithmic
regret. In other words, algorithms that enjoy a logarithmic regret
guarantee over a particular parametric sub-class of reward
distributions are \emph{not} statistically robust.

\ignore{The analysis of lower bounds is important to understand the
  complexity of the problem at hand. Lower bounds provide a
  theoretical limit to the performance of any algorithm. In our MAB
  setting, we are interested in instance specific lower bound. By
  instance specific bound we mean the performance of any algorithm for
  a given instance $\nu \in \M^k.$ Here, we use the cumulative regret
  as the measure of performance of an algorithm. Our main interest is
  to establish the impossibility of logarithmic regret algorithm for
  oblivious classes of algorithms over $\B,$ $\SG,$ or $\G.$}
\begin{theorem}
  \label{lower-bound-main-statement}
  Let $\M \in \{\B,\SG,\SE,\G\}.$ For any algorithm $\pi$ that is
  consistent over $\M^k,$ and any instance $\nu \in \M^k,$
   \begin{align*}
     \lim_{n \to \infty} \frac{R_n(\pi, \nu)}{\log(n)} = \infty.
   \end{align*}
\end{theorem}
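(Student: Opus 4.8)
The plan is to prove the stronger statement that for any suboptimal arm $i$ (such an arm exists whenever the instance is non-degenerate, for otherwise $R_n(\pi,\nu)\equiv 0$ and there is nothing to show), one has $\lim_{n\to\infty}\mathbb{E}_{\nu}[T_i(n)]/\log n = \infty$. Since $R_n(\pi,\nu) \ge \Delta_i\,\mathbb{E}_{\nu}[T_i(n)]$ with $\Delta_i > 0$, this immediately yields the theorem.

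The first ingredient is the classical change-of-measure lower bound (Lai--Robbins; see \citet{lattimore2018bandit}). Fix a suboptimal arm $i$ and consider an alternative instance $\nu'$ that agrees with $\nu$ on every arm except $i$, where $\nu_i$ is replaced by some $\nu_i' \in \M$ with mean $\mu(\nu_i') > \mu^*$, so that $i$ becomes the unique optimal arm under $\nu'$. Both $\nu$ and $\nu'$ lie in $\M^k$, so the consistency of $\pi$ applies to each. The divergence decomposition gives $D(\mathbb{P}_{\nu,\pi}, \mathbb{P}_{\nu',\pi}) = \mathbb{E}_{\nu}[T_i(n)]\, D(\nu_i, \nu_i')$, and combining this with the Bretagnolle--Huber inequality applied to the event $\{T_i(n) \ge n/2\}$, together with the subpolynomial bounds on $\mathbb{E}_{\nu}[T_i(n)]$ and $\sum_{j\ne i}\mathbb{E}_{\nu'}[T_j(n)]$ guaranteed by consistency, yields the standard conclusion
\[
  \liminf_{n\to\infty} \frac{\mathbb{E}_{\nu}[T_i(n)]}{\log n} \ge \frac{1}{D(\nu_i, \nu_i')}.
\]

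The crux of the argument---and the step that distinguishes these broad distribution classes from fixed-parameter families such as $\SG(\sigma)$---is to show that the confusing alternative $\nu_i'$ can be chosen with $D(\nu_i, \nu_i')$ arbitrarily small, i.e., that $\inf\{D(\nu_i, F) : F \in \M,\ \mu(F) > \mu^*\} = 0$. I would construct $\nu_i'$ as a mixture $\nu_i' = (1-p)\,\nu_i + p\,\delta_M$, where $\delta_M$ is a point mass at a large location $M$ and $p \in (0,1)$ is small. A short computation gives $D(\nu_i, \nu_i') = -\log(1-p)$, independent of $M$, while $\mu(\nu_i') = (1-p)\mu_i + pM$ can be forced above $\mu^*$ by taking $M$ large (for any fixed $p$). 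Since a point mass is bounded and each of $\B,\SG,\SE,\G$ is closed under mixing with a bounded distribution---for $\B$ this is immediate; for $\SG$ and $\SE$ one checks the tail (respectively MGF) bound of the mixture; for $\G$ one checks finiteness of the relevant absolute moment---we have $\nu_i' \in \M$, hence $\nu' \in \M^k$. Letting $p \to 0^+$ drives $D(\nu_i,\nu_i') = -\log(1-p) \to 0$.

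Putting the pieces together, for every $p \in (0,1)$ the displayed bound gives $\liminf_{n}\mathbb{E}_{\nu}[T_i(n)]/\log n \ge 1/(-\log(1-p))$; since the left-hand side is independent of $p$ while the right-hand side tends to $\infty$ as $p \to 0^+$, we conclude $\lim_{n}\mathbb{E}_{\nu}[T_i(n)]/\log n = \infty$, and therefore $\lim_{n} R_n(\pi,\nu)/\log n = \infty$. I expect the main obstacle to be the class-membership verification in the construction: one must confirm that perturbing $\nu_i$ to push its mean past $\mu^*$ keeps the perturbed distribution inside $\M$ while costing vanishing KL divergence. This is precisely where the nonparametric nature of $\B,\SG,\SE,\G$ is essential---for a fixed-parameter family the analogous infimum is strictly positive, which is exactly why such families admit only a $\Theta(\log n)$ rather than an $\omega(\log n)$ lower bound.
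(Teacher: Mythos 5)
Your proposal is correct and shares the paper's overall architecture---reduce to the change-of-measure bound $\liminf_{n} \mathbb{E}_\nu[T_i(n)]/\log n \geq 1/D(\nu_i,\nu_i')$ over confusing alternatives, then show that $\inf\{D(\nu_i,F) : F \in \M,\ \mu(F) > \mu^*\} = 0$---but your execution of the key lemma is genuinely different and arguably cleaner. The paper simply cites the instance-dependent lower bound of \citet{lattimore2018bandit} (Theorem 16.2) for the first step, where you re-derive it via divergence decomposition and Bretagnolle--Huber; that is fine and makes the argument self-contained. More substantively, the paper proves the $d_i = 0$ lemma by a two-case argument: for $\M \in \{\SG, \SE, \G\}$ with $\nu_i$ unbounded above it invokes an external lemma (scale the CDF by $e^{-a}$ below a threshold $y$, push the freed mass beyond $y$), and for bounded support it spreads mass $\gamma$ uniformly over an interval $(v, v']$ past the right end-point. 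Your single mixture $\nu_i' = (1-p)\,\nu_i + p\,\delta_M$ handles all four classes at once with the same KL cost $-\log(1-p)$, and it correctly isolates where the union structure of the classes does the work: the mixture is subGaussian (resp.\ subexponential) only for an inflated parameter that grows with $M$, which suffices because $\SG$ and $\SE$ are unions over all parameter values---precisely why the analogous infimum is strictly positive for a fixed-parameter family like $\SG(\sigma)$, as you note. Two small points to tighten: (i) the identity $D(\nu_i,\nu_i') = -\log(1-p)$ requires $M$ to avoid the (at most countable) set of atoms of $\nu_i$; alternatively, observe that an atom of $\nu_i$ at $M$ only decreases the divergence, so $D(\nu_i,\nu_i') \leq -\log(1-p)$ holds always, which is all you need; (ii) in the degenerate case where every arm is optimal, $R_n \equiv 0$ and the stated limit is actually \emph{false}, not merely vacuous, so rather than saying there is ``nothing to show'' you should note that the theorem implicitly assumes at least one suboptimal arm---a caveat the paper's proof shares, since its cited lower bound reduces to an empty sum there.
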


The proof of Theorem~\ref{lower-bound-main-statement} is provided in
Appendix~\ref{lower-bounds-appendix}. The crux of the argument is as
follows. Given an MAB instance $\nu \in \M^k,$ the expected number of
pulls $\Exp{T_i(n)}$ of any suboptimal arm~$i$ over a horizon of~$n$
pulls, under any algorithm that is consistent over $\M^k,$ is lower
bounded as
$$\liminf_{n \ra \infty} \frac{\Exp{T_i(n)}}{\log(n)} \geq \frac{1}{d_i},$$
where
$d_i = \inf_{\nu'_i \in \M}\{D(\nu_i,\nu'_i):\ \mu(\nu'_i) >
\mu^*(\nu)\}$ (see \citet[chap.~16]{lattimore2018bandit}). Informally,
$d_i$ is the smallest perturbation of $\nu_i$ in relative entropy
sense that would make arm~$i$ optimal. The proof of
Theorem~\ref{lower-bound-main-statement} follows by showing that when
$\M$ is $\B,$ $\SG,$ $\SE$ or $\G,$ we have $d_i = 0$ for all
suboptimal arm of any instance. In other words, given any distribution
$\eta \in \M,$ there exists another distribution $\eta' \in \M$ such
that $\mu(\eta')$ is arbitrarily large, even while $D(\eta,\eta')$ is
arbitrarily small.

Theorem~\ref{lower-bound-main-statement} highlights that classical
bandit algorithms are not robust with respect to uncertainty in
support/moment bounds. For example, consider any algorithm $\pi$ that
guarantees logarithmic regret over $\SG(1)$ (for example, the
algorithms presented in Chapters~7--9 in \citet{lattimore2018bandit}).
Theorem~\ref{lower-bound-main-statement} implies that all such
algorithms are \emph{inconsistent} over $\SG.$ This reveals an
inherent fragility of such algorithms---while they might guarantee
good performance over the specific parametric sub-class of reward distributions they are designed for, they are not robust to uncertainty with respect to the parameters that specify the distribution class.

Having shown that robust algorithms cannot achieve logarithmic regret,
in the following section, we present statistically robust algorithms
for $\SE,$ and $\G.$ (Of course, an algorithm that is robust over
$\SE$ is also robust over $\B$ and $\SG$). Specifically, these
algorithms attain a regret that is \emph{slightly} superlogarithmic,
while remaining consistent over $\SE$ and $\G$ respectively.

\ignore{
The above theorem implies that there cannot exist an oblivious
algorithm in
$\{$\textbf{$\mathcal{B}$}$,\text{ }
$\textbf{$\mathcal{SG}$}$,\text{ } $\textbf{$\mathcal{G}$}$\}$ with
logarithmic regret. The general instance specific lower bound is
presented in \citep[chap.~16]{lattimore2018bandit}. The theorem proves
an inverse dependence on
$d_{inf}(\textbf{$\mathcal{M}_i$}, \mu^*, \textbf{$\mathcal{M}$})$,
which is defined as the minimum perturbation in KL-divergence between
\textbf{$\mathcal{M}_i$} and some \textbf{$\mathcal{M}_j$} $\in$
\textbf{$\mathcal{M}$} such that
$\mu(\textbf{$\mathcal{M}_j$}) > \mu^*$. For most of the non-oblivious
class of distributions \textbf{$\mathcal{M}$}, the value of
$d_{inf}(\textbf{$\mathcal{M}_i$}, \mu^*, \textbf{$\mathcal{M}$}) > 0$
and consequently we obtain a lower bound of the order
$\log(n)$. However, in this case we prove that
$d_{inf}(\textbf{$\mathcal{M}_i$}, \mu^*, \textbf{$\mathcal{M}$}) = 0$
when \textbf{$\mathcal{M}$} is one of the oblivious class
$\{$\textbf{$\mathcal{B}$}$,\text{ }
$\textbf{$\mathcal{SG}$}$,\text{ } $\textbf{$\mathcal{G}$}$\}$. This
is precisely captured in Theorem~\ref{lower-bound-main-statement}
which makes the right hand side infinity.

To prove
$d_{inf}(\textbf{$\mathcal{M}_i$}, \mu^*,
\textbf{$\mathcal{M}$}) = 0$, it is equivalent to show that for any
distribution \textbf{$\mathcal{M}_i$} $\in$ \textbf{$\mathcal{M}$},
there exists a distribution \textbf{$\mathcal{M}_j$} $\in$
\textbf{$\mathcal{M}$} such that KL-divergence between
\textbf{$\mathcal{M}_i$} and \textbf{$\mathcal{M}_j$} is arbitrary
small, while the difference between $\mu(\textbf{$\mathcal{M}_i$})$
and $\mu(\textbf{$\mathcal{M}_j$})$ can be made arbitrary large.
}

\ignore{
\begin{lemma}
\label{d_zero_lemma}
Fix $\M \in \{\B,\SG,\G\}.$ For any distribution $\theta \in \M,$ and
for any $a>0$ and $b>\mu(\theta)$, there exists distribution
$\theta' \in \M$ such that
\begin{align*}
  D(\theta,\theta') \leq a \quad \text{ and } \quad \mu(\theta') \geq b.
\end{align*}
\end{lemma}

Lemma~\ref{d_zero_lemma} establishes
Theorem~\ref{lower-bound-main-statement}, and hence, oblivious class
of algorithms cannot have logarithmic regret.}

\section{Statistically robust algorithms}
\label{upper-bounds}

In this section, we demonstrate how statistical robustness can be
achieved by allowing for \emph{slightly superlogarithmic} regret. In
particular, we propose algorithms that are \emph{distribution
  oblivious}, i.e., they do not require any prior information about
the arm distributions in the form of support/moment/tail bounds. By
suitably choosing a certain scaling function that paramterizes the
algorithms, the associated regret can be made arbitrarily close to
logarithmic (in the time horizon). However, this is not an entirely
`free lunch'---tuning the scaling function for stronger asymptotic
regret guarantees can affect the regret for moderate horizon
values. Interestingly though, this trade-off between asymptotic and
short-horizon performance can be tempered by incorporating (noisy)
prior information about support/moment bounds on the arm distributions
into the scaling functions, while maintaining statistical robustness.


We propose three distribution oblivious algorithms for robust regret
minimization in this section. The first, which we call Robust Upper
Confidence Bound (R-UCB) algorithm is suitable for subexponential
(light-tailed) instances. (An instance is said to be light-tailed if
all arm distributions are light-tailed). It uses the empirical average
as an estimator for the mean reward, and uses a confidence bound that
that is a suitably (and robustly) scaled version of the typical
non-oblivious confidence bounds in UCB algorithms.

Next, to deal with the most general class $\G$ of reward
distributions, we propose another algorithm, called R-UCB-G, which
uses \emph{truncated} mean estimators. Empirical averages, which
provide good estimates of the mean for light-tailed arms, can deviate
significantly from the true mean for heavy-tailed arms. To control the
`high variability' in the sample values, a truncated mean estimator is
typically used; see for example, \cite{bubeck,Yu2018PureEO}.  The
truncation parameter in R-UCB-G is scaled with time suitably to
provide statistical robustness. Desirably, both R-UCB \& R-UCB-G are
\emph{anytime} algorithms, and have provable regret guarantees.

Another technique for mean estimation that works well under excessive
variability in the sample values is the Median of Means approach
\citep{bubeck}. We design a statistically robust anytime algorithm
over~$\G^k$ using this approach; due to space constraints, the
algorithm and its performance characterization are presented in
Appendix~\ref{mom-section}.

Before we describe the algorithms, we define the following class of
functions which serve as scaling functions for both algorithms.

\begin{definition}
\label{ref:f}
  A function~$f:\mathbb{N} \ra (0,\infty)$ is said to be \emph{slow
    growing} if
  $$f(t + 1) \geq f(t)\ \forall\ t \in \N, \quad
  \lim_{t \ra \infty} f(t) = \infty, \quad \lim_{t \ra \infty}
  \frac{f(t)}{t^a} = 0 \ \forall \ a > 0.$$
\end{definition}

\subsection{Robust Upper Confidence Bound algorithm for
  light-tailed instances}

\begin{algorithm}[t]
  \caption{R-UCB}
  \label{robust-light-UCB-algo}
  \textbf{Input} $k$ arms, slow growing scaling function $f$
  \begin{algorithmic}
    \For  {$t=1$ to $k$}
    \State Pull arm with index $i = t-1$ and observe reward $R_t$
    \State Update $\hat{\mu}(i, u_i) \leftarrow r$, $u_i \leftarrow 1$
    \EndFor 
    \For { $t=k+1, k+2, \dots$ } 
    \State Calculate the upper confidence bound as $$\U(i, u_i, t) = \hat{\mu}(i, u_i) + \underbrace{\sqrt{\frac{f(t)\log(t)}{u_i}}}_{\W(u_i, t)} $$
	\State Pull arm $i$ maximizing $\U(i, u_i, t)$ and observe reward $R_t$
	\State Update empirical average $\hat{\mu}(i, u_i)$ and $u_i\leftarrow
        u_i+1$
    \EndFor  
  \end{algorithmic}
\end{algorithm}

The R-UCB algorithm is presented in
Algorithm~\ref{robust-light-UCB-algo}. The only structural difference
between R-UCB and the classical UCB algorithm is in the definition of
the upper confidence bound---under R-UCB, the confidence
width~$\W(u_i, t)$ for arm~$i$ at time~$t,$ where $u_i$ denotes the
number of pulls of arm~$i$ prior to time~$t,$ is scaled by a slow
growing function~$f.$ This simple scaling provides statistical
robustness over light-tailed instances, as established in
Theorem~\ref{thm:sr-ucb-1} below. We prove the consistency of R-UCB
over all subexponential instances, albeit with superlogarithmic
regret. We also provide stronger guarantees for subgaussian instances.

\begin{theorem}
  \label{thm:sr-ucb-1}
  Consider the algorithm R-UCB with a specified slow growing scaling
  function~$f.$ For an instance $\nu \in \SE(v, \alpha)^k,$ there
  exists threshold $t^{\SE}_{min}(v, \alpha)$ such that for
  $t > t^{\SE}_{min}(v, \alpha),$ the regret under R-UCB satisfies
  \begin{equation}
    \label{eq:rucb-se}
    R_t(\nu) \leq \sum_{i:\Delta_i>0} \left( f(t)\log(t) ~ \max \left\{
        \frac{4}{\Delta_i}, \Delta_i \left( \frac{\alpha}{v^2} \right)^2
      \right\} + 4\Delta_i \right).
  \end{equation}
  For an instance $\nu \in \SG(\sigma)^k,$ there exists a threshold
  $t^{\SG}_{min}(\sigma)$ such that for $t > t^{\SG}_{min}(\sigma),$
  the regret under R-UCB satisfies
  \begin{equation}
    \label{eq:rucb-sg}
    R_t(\nu) \leq \sum_{i:\Delta_i>0} \left(
      \frac{4f(t)\log(t)}{\Delta_i} + 4\Delta_i \right).
  \end{equation}
\end{theorem}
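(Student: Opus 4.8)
The plan is to adapt the classical finite-time UCB analysis: control the expected number of pulls $\Exp{T_i(t)}$ of each suboptimal arm $i$ separately, then sum $\Delta_i\Exp{T_i(t)}$. First I would fix a pull-count threshold $u_0$ and write $\Exp{T_i(t)} \le \lceil u_0\rceil + \Exp{\#\{s \le t : A_s = i,\ u_i(s) > u_0\}}$, where $A_s$ is the arm pulled at round $s$. The heart of the argument is the three-way split: if arm $i$ is pulled at round $s$, then $\U(i,u_i,s) \ge \U(i^*,u_{i^*},s)$ forces at least one of (E1) the optimal arm is underestimated, $\hat\mu(i^*,u_{i^*}) + \W(u_{i^*},s) < \mu^*$; (E2) arm $i$ is overestimated, $\hat\mu(i,u_i) > \mu_i + \W(u_i,s)$; or (E3) the width is still large, $\W(u_i,s) \ge \Delta_i/2$. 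Choosing $u_0$ so that $\W(u_0,s) < \Delta_i/2$ for all $s \le t$ rules out (E3) once $u_i > u_0$, so every over-threshold pull is charged to (E1) or (E2). This makes $\lceil u_0\rceil$ the main term and leaves the expected count of concentration failures to be bounded by a constant.

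For subgaussian instances I would take $u_0 = 4f(t)\log(t)/\Delta_i^2$, giving exactly the main term $\Delta_i\cdot 4f(t)\log(t)/\Delta_i^2$, and bound the two failure contributions by reindexing over sample counts. For (E2), each over-threshold pull corresponds to a distinct count $u>u_0$ at a round $s_u \ge u$; monotonicity of $f$ and $\log$ gives $\W(u,s_u) \ge \sqrt{f(u)\log(u)/u}$, and subgaussian concentration yields $\prob{\hat\mu(i,u)-\mu_i > \W(u,s_u)} \le u^{-f(u)/(2\sigma^2)}$, whose sum over $u>u_0$ is small. For (E1), a union bound over the optimal arm's count is harmless here because the subgaussian tail $\exp(-f(s)\log(s)/(2\sigma^2)) = s^{-f(s)/(2\sigma^2)}$ is independent of the count, so $\prob{\text{E1 at }s} \le s^{1-f(s)/(2\sigma^2)}$, summable once $f(s) > 6\sigma^2$. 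I would then define $t^{\SG}_{min}(\sigma)$ large enough that both $f(s)>6\sigma^2$ for $s>t^{\SG}_{min}$ and the threshold $u_0(t)$ exceeds $t^{\SG}_{min}$; the latter guarantees that no round $s \le t^{\SG}_{min}$ contributes an over-threshold pull (since then $u_i(s)\le s \le t^{\SG}_{min} < u_0$), so only the summable tails remain and total at most $3$. This yields $\Exp{T_i(t)} \le 4f(t)\log(t)/\Delta_i^2 + 4$ and hence \eqref{eq:rucb-sg}.

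For subexponential instances the only change is that the one-sided deviation of an empirical average obeys a two-regime (Bernstein-type) bound $\prob{\hat\mu-\mu > \epsilon} \le \exp(-\tfrac{u}{2}\min\{\epsilon^2/v^2,\ \epsilon/\alpha\})$. Substituting $\epsilon = \W(u,s) = \sqrt{f(s)\log(s)/u}$, the subgaussian term equals $f(s)\log(s)/(2v^2)$ and dominates precisely when $u \ge f(s)\log(s)(\alpha/v^2)^2$. This is why I would enlarge the threshold to $u_0' = f(t)\log(t)\max\{4/\Delta_i^2,\ (\alpha/v^2)^2\}$: the $4/\Delta_i^2$ factor kills (E3) as before, while the $(\alpha/v^2)^2$ factor forces every over-threshold overestimation test for arm $i$ back into the subgaussian regime, restoring the $u^{-f(u)/(2v^2)}$ tails for (E2). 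The enlargement is exactly what produces the $\max$ in \eqref{eq:rucb-se}, since $\Delta_i u_0' = f(t)\log(t)\max\{4/\Delta_i,\ \Delta_i(\alpha/v^2)^2\}$.

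The hard part will be the optimal-arm underestimation (E1) in the subexponential case. Here the naive union bound over the optimal arm's sample count is fatal: its small-count terms fall in the subexponential regime, contributing $\exp(-\sqrt{u'f(s)\log(s)}/(2\alpha))$, whose sum over counts behaves like $1/(f(s)\log(s))$, and then $\sum_s 1/(f(s)\log(s))$ diverges for every slow-growing $f$. The resolution I would pursue is to show that the optimal arm is itself in the subgaussian regime for all large $s$, i.e. $u_{i^*}(s) \ge f(s)\log(s)(\alpha/v^2)^2$. This holds because the total number of suboptimal pulls is $\sum_{j:\Delta_j>0} T_j(s) = O(f(s)\log(s)) = o(s)$ by the very bounds being established, so $u_{i^*}(s) = s - \sum_{j \neq i^*} T_j(s)$ grows linearly and eventually dwarfs $f(s)\log(s)(\alpha/v^2)^2$. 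Because this reasoning uses the suboptimal-arm bounds to control the optimal arm, which in turn controls the suboptimal-arm failures, the estimates for all arms must be closed simultaneously (by induction on $s$, or by first proving a crude $o(s)$ bound on suboptimal pulls and then bootstrapping). Once the optimal arm is confined to the subgaussian regime, its underestimation probability is again at most $s^{1-f(s)/(2v^2)}$, summable past a threshold $t^{\SE}_{min}(v,\alpha)$ chosen as in the subgaussian case, the failure counts total at most $3$, and \eqref{eq:rucb-se} follows.
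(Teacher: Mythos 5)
Your proposal reproduces the paper's argument almost step for step: the same three-event decomposition (optimal-arm underestimation $E_1$, suboptimal-arm overestimation $E_2$, large confidence width $E_3$), the same Bernstein-type subexponential concentration with crossover at $\W \leq v^2/\alpha$, the same enlarged pull threshold $f(t)\log(t)\max\{4/\Delta_i^2,\ (\alpha/v^2)^2\}$ producing the $\max$ in \eqref{eq:rucb-se}, union bounds yielding $t^{-3}$-type failure probabilities and a constant additive term, and the subGaussian bound \eqref{eq:rucb-sg} recovered as the $\alpha \to 0$ special case. Up to minor bookkeeping differences (you charge over-threshold $E_2$ pulls to distinct sample counts rather than to rounds), this is the paper's proof.

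The one place you genuinely depart is the treatment of $E_1$ in the subexponential case, and your instinct there is sound: the paper derives the $t^{-4}$ tail only for counts $u > u_0 = f(t)\log(t)(\alpha/v^2)^2$, yet then applies it to \emph{all} $u \in [t]$ in the union bound $\prob{E_1} \leq t \cdot t^{-4}$. For the optimal arm nothing forces $T_{i^*}(t-1) > u_0$, and for $u \leq u_0$ the width $\sqrt{f(t)\log(t)/u}$ sits in the subexponential regime, where the tail is only $\exp(-\sqrt{u f(t)\log(t)}/(2\alpha))$---not polynomially small in $t$ for slow-growing $f$, exactly as you observe. So you have identified a real lacuna that the paper's own proof glosses over. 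However, your proposed repair is only a sketch, and the difficulty you flag is not resolved by it as stated: you need $u_{i^*}(s) \geq f(s)\log(s)(\alpha/v^2)^2$ to hold with per-round failure probability that is summable to a constant, but the bounds ``being established'' control only $\Exp{T_j(s)}$. Converting expectation to probability via Markov gives a per-round failure probability of order $f(s)\log(s)/s$, and $\sum_{s \leq t} f(s)\log(s)/s$ grows like $f(t)\log^2(t)$, which contaminates the regret bound beyond the stated $+4\Delta_i$ additive constant (consistency would survive, but the exact form of \eqref{eq:rucb-se} would not). Closing the loop by induction on $s$ with simultaneous high-probability bounds on all arms' counts is plausible but constitutes the actual hard work, and your proposal stops short of carrying it out. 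In short: away from $E_1$ your proof matches the paper's; at $E_1$ you are more careful than the paper, but your fix is incomplete, so as written neither your argument nor the paper's fully establishes \eqref{eq:rucb-se} with the stated constants.
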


  

The key take-aways from Theorem~\ref{thm:sr-ucb-1} are as follows.
\begin{itemize}
\item R-UCB is clearly consistent over~$\SE^k,$ but the regret
  guarantee is super-logarithmic, as demanded by
  Theorem~\ref{lower-bound-main-statement}.
\item R-UCB is distribution oblivious in the sense that it does not
  need the parameters $v,\alpha$ in the implementation. However, the
  stated regret guarantee holds for $t$ greater than an
  instance-dependent threshold $t^{\SE}_{min}(v, \alpha)$---this is
  because the confidence width needs to be large enough for certain
  concentration properties to hold. Explicit characterization of the
  threshold $t^{\SE}_{min}(v, \alpha)$, along with (weaker) regret
  bounds for $t$ less than this threshold, are provided in
  Appendix~\ref{proof-sr-ucb-1}. 
\item Choosing $f$ to be `slower' growing leads to better asymptotic
  regret guarantees, but increases the threshold $t_{min}.$ This
  implies a trade-off between asymptotic and short-horizon performance
  in a purely oblivious setting. However, (noisy) prior information
  about the class of arm distributions can be incorporated into the
  choice of scaling function~$f$ to dilute this tradeoff. For example,
  if it is believed that the arm distributions are
  $\sigma$-subgaussian, then one may set $f(t) = 8\sigma^2 + h(t),$
  where $h(\cdot)$ is slow growing; this choice of motivated by the
  observation that for the well known (non-robust) $\alpha$-UCB
  algorithm \citep{Bubeck2012}, $f$ would be replaced by
  $2\alpha \sigma^2,$ $\alpha > 1$ for $\sigma$-subGaussian arms. This
  choice would make $t^{\SG}_{min}$ small if the arms are
  $\sigma'$-subgaussian, where $\sigma' \approx \sigma,$ while still
  providing statistical robustness to the reliability of this prior
  information; see Appendix~\ref{proof-sr-ucb-1}. We also illustrate
  this phenomenon in our numerical experiments in
  Section~\ref{expt-analysis}.
\item Stronger performance guarantees are possible for the
  subclass~$\SG^k.$ Indeed, given that
  $\SG(\sigma) \subset \SE(\sigma,\alpha)$ for all $\alpha > 0,$ the
  guarantee~\eqref{eq:rucb-sg} is stronger than~\eqref{eq:rucb-se} for
  $\nu \in \SG(\sigma)^k.$
\end{itemize}
The proof of Theorem~\ref{thm:sr-ucb-1} is provided in
Appendix~\ref{proof-sr-ucb-1}.

\subsection{Robust Upper Confidence Bound algorithm for
  arbitrary instances}
 \label{ssec:r-ucb-2}

The R-UCB algorithm discussed above is robust to parametric
uncertainties, and guarantees `slightly-worse-than-logarithmic' regret
for any light-tailed bandit instance. However, one could argue that
R-UCB is still not \emph{truly} robust---after all, how can we be
certain in a practical scenario that there are no heavy-tailed arms
involved? From a viewpoint of applications such as financial
portfolios and insurance, heavy-tailed distributions are ubiquitously
used in modelling. Therefore there is a compelling case for handling
heavy-tailed as well as light-tailed arms' distributions within a
common, statistically robust framework.

In this section, we propose a truly robust algorithm for the most
general setting, i.e., for bandit instances in~$\G^k.$ We recall that
the class $\G$ demands only the boundedness of the
$(1+\epsilon)$-moment for some $\epsilon>0.$ This is only mildly more
demanding than the finiteness of the mean,\footnote{Distributions with
  finite mean that do not belong to $\G$ are quite pathological, and
  are of little practical interest.} which is necessary for the MAB
problem to be well-posed.

Once the restriction to light-tailed reward distributions is removed,
more sophisticated estimators than empirical averages are required;
this is because empirical averages are highly sensitive to (relatively
frequent) outliers in heavy-tailed data. One such approach is to use
truncation-based estimators (see, for example, \cite{bubeck}), which
offer lower variability at the expense of a (controllable) bias. The
R-UCB-G algorithm, stated formally as
Algorithm~\ref{robust-heavy-UCB-algo}, uses a truncation-based
estimator in conjunction with a robust scaling of the confidence
bound. Note that the same scaling function~$f$ is used for both
truncation as well for scaling the confidence bound.

R-UCB-G provides the following performance guarantee over instances in~$\G^k.$ To the best of our knowledge, this is the first time a single algorithm has been shown to provide provable regret guarantees in such generality. 
\begin{algorithm}[t]
  \caption{R-UCB-G}
  \label{robust-heavy-UCB-algo}
  \textbf{Input} $k$ arms, slow growing scaling function $f$ taking
  values in $(1,\infty)$
  \begin{algorithmic}
     \State \textbf{Initialize} $\mathcal{R}_i = \{~\}$, $u_i=0$ for all arm $i$
    \For  {$t=1$ to $k$}
    \State pull arm with index $i = t-1$ and observe reward $r$
    \State Append r to $\mathcal{R}_i$ and update $u_i \leftarrow u_i + 1$
    \EndFor 
    \For { $t=k+1, k+2, \dots$ } 
    \State Calculate the upper confidence bound as $$\U(i, u_i, t) = \underbrace{\frac{1}{u_i} \sum_{X \in \mathcal{R}_i} X\mathbbm{1}_{ \left \{ |X| \leq f(t) \right \} }}_{\hat{\mu}(i, u_i, t)} + \underbrace{\frac{1}{\log(f(t))} + \frac{16f(t)\log(t)}{u_i}}_{\W(u_i, t)} $$
	\State Pull arm $i$ maximizing $\U(i, u_i, t)$ and observe reward $R_t$
	\State Append $R_t$ to $\mathcal{R}_i$ and update $u_i \leftarrow u_i+1$
    \EndFor  
  \end{algorithmic}
\end{algorithm}

\begin{theorem}
  \label{thm:sr-ucb-2}
  Consider the algorithm R-UCB-G with a specified slow growing scaling
  function~$f$ taking values in $(1,\infty).$ For an instance
  $\nu \in \mathcal{G}(\epsilon,B)^k,$ there exists a threshold
  $t_{min}(\epsilon,B)$ such that for $t > t_{min}(\epsilon,B),$ the
  regret under R-UCB-G satisfies
  $$R_t(\nu) \leq \sum_{i:\Delta_i>0} \left( \frac{32f(t)\log(t)}{1 -
      \frac{2}{\Delta_i \log(f(t))}} + 4\Delta_i \right).$$
\end{theorem}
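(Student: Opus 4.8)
The plan is to follow the standard upper-confidence-bound regret decomposition, adapted to the truncated estimator, and to show that each suboptimal arm $i$ satisfies $\Exp{T_i(t)} \le \frac{32 f(t)\log(t)}{\Delta_i\left(1 - \frac{2}{\Delta_i \log(f(t))}\right)} + 4$; multiplying by $\Delta_i$ and summing over $i$ with $\Delta_i > 0$ then yields the claimed bound, since $R_t(\nu) = \sum_{i:\Delta_i>0}\Delta_i\Exp{T_i(t)}$. First I would analyze the estimator $\hat{\mu}(i,u_i,t)$ by splitting its error, relative to the truncated mean $\tilde{\mu}_i(t) := \Exp{X\mathbbm{1}_{\{|X|\le f(t)\}}}$ for $X\sim\nu_i$, into a bias and a fluctuation part. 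The moment condition $\nu_i\in\G(\epsilon,B)$ gives $|\mu_i - \tilde{\mu}_i(t)| \le \Exp{|X|\mathbbm{1}_{\{|X|>f(t)\}}} \le B/f(t)^{\epsilon}$; since $f(t)\to\infty$ and $x^{\epsilon}/\log(x)\to\infty$, there is a threshold $t_{min}(\epsilon,B)$ beyond which this bias is at most $1/\log(f(t))$, which is exactly the first term of the confidence width $\W(u_i,t)$.

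Next I would control the fluctuation of $\hat{\mu}(i,u_i,t)$ around $\tilde{\mu}_i(t)$. Since all samples in $\mathcal{R}_i$ are truncated at the common level $f(t)$, the summands $Y_j = X_j\mathbbm{1}_{\{|X_j|\le f(t)\}}$ are i.i.d., bounded in magnitude by $f(t)$, with $\Exp{Y_j^2}\le B f(t)^{1-\epsilon}$ for $\epsilon<1$ (and bounded by a constant when $\epsilon\ge 1$). The crucial observation is that requiring $|\hat{\mu}(i,u_i,t) - \tilde{\mu}_i(t)| \le \frac{16 f(t)\log(t)}{u_i}$ is equivalent to bounding the centered sum $\sum_{j\le u_i}(Y_j - \tilde{\mu}_i(t))$ by the fixed quantity $16 f(t)\log(t)$, independent of $u_i$. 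Applying Bernstein's inequality to this fixed deviation, the range term dominates whenever $u_i$ is below a constant multiple of $f(t)^{1+\epsilon}\log(t)/B$, and there the failure probability is at most $t^{-6}$; the constant $16$ is chosen precisely so that this exponent is large enough for the later summation. Enlarging $t_{min}(\epsilon,B)$ ensures $f(t)$ is large enough that the range term dominates over the sample counts relevant to a suboptimal arm. Combined with the bias bound, this shows that, with high probability, $\W(u_i,t)$ is a valid two-sided confidence width.

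With the confidence bounds in hand, the comparison is the usual one. On the event that the optimal arm's bound is valid, $\U(*,u_*,t)\ge\mu^*$, and that $\hat{\mu}(i,u_i,t) \le \tilde{\mu}_i(t)+\frac{16f(t)\log(t)}{u_i} \le \mu_i + \W(u_i,t)$, I would bound $\U(i,u_i,t) = \hat{\mu}(i,u_i,t)+\W(u_i,t) \le \mu_i + 2\W(u_i,t) = \mu_i + \frac{2}{\log(f(t))} + \frac{32 f(t)\log(t)}{u_i}$. Forcing this below $\mu^* = \mu_i + \Delta_i$ shows arm $i$ stops being pulled once $u_i \ge u_0 := \frac{32 f(t)\log(t)}{\Delta_i\left(1 - \frac{2}{\Delta_i\log(f(t))}\right)}$, which is exactly the leading term of the claimed per-arm bound (meaningful only once $\Delta_i\log(f(t))>2$, i.e. in the large-$t$ regime). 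Hence $\Exp{T_i(t)} \le u_0 + \sum_t P(\text{some confidence bound fails})$, and the residual sum must be shown to contribute at most the additive constant $4$.

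The main obstacle is precisely this residual summation, which is asymmetric between the arms. For the suboptimal arm $i$ the relevant sample counts are moderate ($u_i = O(f(t)\log(t)/\Delta_i)$), so a union over its sample counts stays in the range-dominated regime and the $\log(t)$ factor makes a sum of the form $\sum_t t\cdot t^{-6}$ converge to a small constant. The delicate case is the \emph{optimal} arm, whose count $u_*$ is typically of order $t$: there the width $\frac{16 f(t)\log(t)}{u_*}$ shrinks like $1/u_*$, which is tighter than the $1/\sqrt{u_*}$ scale that Bernstein concentration permits, so a naive union bound over $u_*\le t$ diverges. I expect this to require a more careful argument than a direct union bound---for instance a peeling/maximal-inequality argument over dyadic blocks of $u_*$, or splitting the optimal arm's deviation into a width-scale part (absorbed into a slightly enlarged $u_0$) and a fixed $\Omega(\Delta_i)$-scale part whose probability is genuinely small at every sample count---and it is here that the slow growth of $f$ and the threshold $t_{min}(\epsilon,B)$ do the real work in driving the total failure probability down to a constant.
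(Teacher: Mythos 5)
Your skeleton (the three-event UCB decomposition, the bias bound $|\mu_i - \tilde{\mu}_i(t)| \le B/f(t)^{\epsilon}$, and solving $\Delta_i \ge 2\W(u,t)$ for the per-arm pull count) matches the paper's proof. But there is a genuine gap, and you have located it yourself without closing it: your allocation of the confidence width---the entire constant term $1/\log(f(t))$ assigned to the truncation bias, the entire $16f(t)\log(t)/u$ term assigned to the fluctuation---cannot be made to work for the optimal arm, because a $1/u$-scale width is incompatible with the $1/\sqrt{u}$ Bernstein fluctuation once $u_* \sim t$, and so the union bound over $u_* \in [t]$ needed for the event $\U(i^*, u_*, t) \le \mu^*$ diverges. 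You propose peeling or a two-scale splitting of the optimal arm's deviation as speculative fixes, but leave the argument incomplete.

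The paper's resolution is simpler and requires no peeling: the constant term of the width does double duty. Starting from the Bernstein-type bound for the truncated variables (range $f(t)$, second moment at most $Bf(t)^{1-\epsilon}$), the variance-scale term is absorbed by AM--GM,
\begin{align*}
\sqrt{\frac{2Bf(t)^{1-\epsilon}\log(1/\delta)}{u}} \;\le\; \frac{B}{2f(t)^{\epsilon}} + \frac{f(t)\log(1/\delta)}{u},
\end{align*}
so that with $\delta = t^{-4}$ the total deviation (bias included) is at most $\frac{3B}{2f(t)^{\epsilon}} + \frac{16 f(t)\log(t)}{u}$, \emph{uniformly in $u$}, with failure probability $t^{-4}$ for every fixed $u$. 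The threshold $t_{min}$ is then chosen so that $3B\log(f(t)) < 2f(t)^{\epsilon}$, which makes $\frac{3B}{2f(t)^{\epsilon}} \le \frac{1}{\log(f(t))}$---i.e., the $1/\log(f(t))$ term covers the bias \emph{and} the AM--GM remainder of the variance term, not the bias alone. With a uniform-in-$u$ failure probability of $t^{-4}$, the naive union bound over $u \in [t]$ gives $\mathbb{P}(E_1), \mathbb{P}(E_2) \le t^{-3}$, and $\sum_s 2/s^3 \le 4$ yields the additive constant; the optimal arm at $u_* \sim t$ poses no special difficulty. So the missing idea in your proposal is precisely this reallocation inside the width; once you adopt it, the rest of your argument goes through as written and coincides with the paper's.
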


The performance guarantee of R-UCB-G is structurally similar to that
for R-UCB: The algorithm is consistent, with a super-logarithmic
regret that is dictated by the growth of the scaling function~$f.$
Moreover, while slowing the growth of~$f$ improves the asymptotic
regret guarantee, it causes $t_{min}$ to increase, potentially
compromising the performance for shorter horizons. As before, prior
information on, say, moment bounds satisfied by the arm distributions
can be incorporated into the design of~$f.$ For example, if it is
believed that $\nu \in \G(\epsilon,B),$ a natural choice of $f$ would be
$f(t) = c + h(t),$ where $h(\cdot)$ is a slow growing function, and
$c > 1$ is the smallest constant satisfying:
$\log(x) \leq x^{\epsilon}/3B$ for all $x \geq c;$ this choice would
make $t_{min}$ close to zero for instances in $\G(\epsilon',B')^k,$
for $\epsilon' \approx \epsilon,$ $B' \approx B$ (see Appendix
\ref{proof:sr-ucb-2}). The proof of Theorem \ref{thm:sr-ucb-2} is
provided in Appendix \ref{proof:sr-ucb-2}.


\ignore{

After establishing the impossibility of the existence of an oblivious
algorithm over the distributions of interest, we propose near-optimal
algorithms. In an oblivious setting, the parameters of the underlying
reward distribution cannot be used as an input to the
algorithm. Devising a suitable oblivious concentration inequality is
crucial in proving a theoretical upper bound to an algorithm. To take
into account the effect of an unknown parameter, we introduce a
suitable monotonic function which varies with the number of rounds. We
make the function increasing or decreasing depending on the nature of
the concentration inequality we intend to work with. The choices of
the function and its monotonicity ensures that the inequality is
guaranteed for all $t>t_{min}$, where $t_{min}$ depends on the
parameter of the underlying distribution and the choice of the
function. We can therefore assume that the inequalities are true in
limiting sense. Hence, the cumulative regret upper bound is also valid
for asymptotic cases. For different choices of the function, we get
different values of $t_{min}$. We will also see that there is a
tradeoff between the value of $t_{min}$ and the asymptotic regret
bound. If we are interested in lowering the $t_{min}$ we get a worse
regret bound. Similarly, choosing the functions so that the bound is
arbitrary close to being logarithmic results in a very high value of
$t_{min}$. The algorithms we propose for the classes $\{\B, \SG,\G\}$
are any-time algorithms, that is, the algorithm does not require the
knowledge of horizon. We propose another fixed-horizon algorithm for
$\{\G\}$ which, as we will see, has a slightly better regret bound
than the corresponding any-time algorithm (\ashutosh{asymptotic is
  same but $t_0$ is less for MoM because of it being central moment}).

All the proposed algorithms are structurally similar to the Upper Confidence Bound (UCB) algorithm where we have a mean estimator and a confidence width. We define $\U(i, u, t)$ as the upper confidence bound of $i^{th}$ arm after being chosen $u$ times in $t$ rounds. Using a similar notation, we define $\hat{\mu}(i, u, t)$ as the mean estimator and  $\W(u, t)$ as the confidence width. Consequently, $\U(i, u, t) = \hat{\mu}(i, u, t) + \W(u, t)$. Note that the confidence width does not depend on either the reward distribution or the sampled value, thus being independent of arm index $i$. 

For different class of oblivious distributions out of $\{\B, \SG,\G\}$, we get different confidence width $\W$. The choice of $\W$ depends on the concentration inequality to be employed. $\W$ is chosen so that the deviation of empirical mean from the true mean lies within $\W$ with a high probability. $\hat{\mu}$ is the mean estimator which is simply the empirical average for $\{\B,\SG\}$ and Truncated Empirical Estimator (TEA) for $\{\G\}$. We denote the three algorithms as $\pi_{\B}$, $\pi_{\SG}$ and $\pi_{\G_{TEA}}$ respectively. Finally, the fixed-horizon algorithm for $\{\G\}$ uses Median of Means (MoM) estimator for $\hat{\mu}$. We denote this algorithm by $\pi_{\G_{MoM}}$. Algorithm \ref{any-time-version} provides a pseudo-code for $\pi_{\B}$, $\pi_{\SG}$ and $\pi_{\G_{TEA}}$. The proves of the regret bound for the three class of distributions are similar to each other and hence we prove a rather generic Theorem \ref{general-upper-bound-thm} which states an expression for $\mathbb{E}[T_i(n)]$, that is, the expected number of times a sub-optimal arm is chosen.

\alternate{
\begin{theorem}
(Regret upper bound for algorithm \ref{robust-light-UCB-algo}) For all bandit instance $\nu \in \M^k$ where ${\M \in \{\B,\SG,\SE\}}$, and for all increasing function $f:\mathbb{N} \rightarrow \mathbb{R}$, the total cumulative regret till $t$ rounds of algorithm satisfies
$$ R_t(\pi_{RL-UCB}, \nu) \leq \sum_{j:\Delta_j>0} \left( 3\Delta_j + 16f(t)^2 \frac{\log(t)}{\Delta_j} \right) $$
\end{theorem}
}


\begin{theorem}
\label{general-upper-bound-thm}
    Consider algorithms $\pi_{\B}$, $\pi_{\SG}$ and $\pi_{\G_{TEA}}$. Take a monotonically increasing function $f:\mathbb{N}\rightarrow \mathbb{R}$ and a monotonically decreasing function $g:\mathbb{N}\rightarrow \mathbb{R}^{+}$. We make the following choices for mean estimator $\hat{\mu}(i, u, t)$ and width $\W(u, t)$.
    \begin{align*}
        \pi_{\B}: \quad \hat{\mu}(i, u, t) = \sum_{j=1}^{u} X_j, \quad \W(u, t) = f(t)\sqrt{\frac{2\log(t)}{u}}
    \end{align*}
    \begin{align*}
        \pi_{\SG}: \quad \hat{\mu}(i, u, t) = \sum_{j=1}^{u} X_j, \quad \W(u, t) = f(t)\sqrt{\frac{8\log(t)}{u}}
    \end{align*}
    \begin{align*}
        \pi_{\G_{TEA}}: \quad \hat{\mu}(i, u, t) = \frac{1}{u} \sum_{i=j}^{u} X_j \mathbbm{1}_{\left\{|X_j|\leq \left( \frac{uf(t)}{4\log(t)} \right)^{\frac{1}{1+g(t)}} \right\}}, \W(u, t) = 4 f(t)^{\frac{1}{1+g(t)}} \left( \frac{\log(2t^{4})}{u} \right)^{\frac{g(t)}{1+g(t)}}
    \end{align*}
    \begin{align*}
        \alternate{\pi_{\G_{TEA}}: \quad \hat{\mu}(i, u, t) = \frac{1}{u} \sum_{i=j}^{u} X_j \mathbbm{1}_{\left\{|X_j|\leq f(t) \right\}}, \quad \W(u, t) = \frac{1}{\log(f(t))} + f(t) \frac{\log(2t^4)}{u}}
    \end{align*}
The following properties holds true for class of distribution $\{\B, \SG,\G\}$:
\begin{enumerate}
    \item There exists $t_{min}$ such that for all $t>t_{min}$, $\mathbb{P}\left( |\hat{\mu}(i, u, t) - \mu_{\{i,u,t\}}| \geq \W(u, t) \right) \leq t^{-4}$
    \item $\mathbb{E}[T_i(t)] \leq u_i + 2$ where $u_i$ is the minimum $u$ satisfying $\Delta_i \geq 2\W(u, t)$
\end{enumerate}
\end{theorem}

Property 1 of theorem \ref{general-upper-bound-thm} uses appropriate concentration inequalities for the three classes of distributions $\{\B, \SG,\G\}$. The choices of inequality and functions $f$ and $g$ ensures that the inequalities are valid for all $t>t_{min}$. We justify the choices of $\hat{\mu}(i, u, t)$ and $\W(u, t)$ in the next subsections. We use property 1 in proving property 2. Property 2 proposes a generic upper bound for all the three classes of algorithms. We will see that in all the cases the asymptotic regret bound can be made arbitrary close to logarithmic using suitable choices of $f$ and $g$.



\subsection{Confidence width $\W$, mean estimator $\hat{\mu}$ and regret $R_t(\pi_{\B})$ for $\B$}
The basic idea to obtain a concentration bound on $\B$ is to first obtain a similar concentration bound on $\B([a, b])$ and then replace the parameters $\{a, b\}$ with a suitable oblivious parameter. Hoeffding Inequality \ashutosh{(cite)} is a standard concentration inequality involving the mean of bounded random variables. We introduce a monotonically increasing function $f:\mathbb{N}\rightarrow\mathbb{R}$ in the confidence width $\W$. As a result of this, we get a term $\frac{f(t)}{(b-a)}$ in the concentration inequality that we use. Now from the instant $f(t) > (b-a)$, the inequality remains valid and we thus obtain an oblivious concentration inequality. Consequently $t_{min} = f^{-1}(b-a)$. We state the inequality that is used to obtain the width $\W$.

\begin{lemma}
\label{bounded-conc-inequality}
    Consider random variables $X_1, X_2, \dots, X_u \sim \B$ with mean $\mu$. For sufficiently large $t$, with a probability less than $\delta$, we have
    \begin{align*}
        \mu - \hat{\mu} \geq f(t)\sqrt{\frac{\log(\frac{1}{\delta})}{2u}} \leq \delta \qquad \text{where} \qquad \hat{\mu} = \frac{1}{u} \sum_{j=1}^{u} X_j
    \end{align*}
\end{lemma}
We choose $\delta=t^{-4}$ in Theorem \ref{general-upper-bound-thm}.
We can now obtain the cumulative regret $R_n(\pi_{\B})$ using theorem \ref{general-upper-bound-thm} and the above expression of $\W(u, n)$
\begin{corollary}
\label{bounded-regret-bound}
    \begin{align*}
        R_t(\pi_{\B}) \leq \sum_{j:\Delta_j>0} \left( 3\Delta_i + 8f(t)^2 \frac{\log(t)}{\Delta_i} \right)
    \end{align*}
\end{corollary}

We can see that the regret expression is no more logarithmic. However, the only condition on the choice of $f$ is that of being a monotonic increasing function. As a result of this, we can choose a very slow increasing function $f$ such that the regret is just a little worse than $\log(n)$. More precisely, there are several choices of $f$ for which $\pi_{\B}$ is \textit{consistent} (Definition \ref{Lattimore16.1}). Increasing functions like $c\log(t)^{\alpha}~c,a>0$, $\log(\log(t))$ are some of the possible choices of $f$. However, it is evident that slower the rate of change of $f$, higher will be the value of $t_{min}$. As an example, $t_{min} \approx 22000$ rounds if $(b-a)=10$ and $f(t)=\log(t)$.

\subsection{Confidence width $\W$, mean estimator $\hat{\mu}$ and regret $R_t(\pi_{\SG})$ for $\SG$ }
A very similar analysis is used to obtain an oblivious concentration inequality on $\{\SG\}$. In this case $t_{min} = f^{-1}(\sigma)$.
 
\begin{lemma}
\label{subgaussian-conc-inequality}
    Consider random variables $X_1, X_2, \dots, X_u \sim \SG$ with mean $\mu$. For sufficiently large $t$, with a probability less than $\delta$, we have
    \begin{align*}
        \mu - \hat{\mu} \geq f(t)\sqrt{\frac{2\log(\frac{1}{\delta})}{u}} \leq \delta \qquad \text{where} \qquad \hat{\mu} = \frac{1}{u} \sum_{j=1}^{u} X_j
    \end{align*}
\end{lemma}

We again choose $\delta=t^{-4}$. The regret expression can also be obtained in similar way.
\begin{corollary}
\label{subgaussian-regret-bound}
    \begin{align*}
        R_t(\pi_{\SG}) \leq \sum_{j:\Delta_j>0} \left( 3\Delta_i + 32f(t)^2 \frac{\log(t)}{\Delta_i} \right)
    \end{align*}
\end{corollary}

%

\subsection{Confidence width $\W$, mean estimator $\hat{\mu}$ and regret $R_t(\pi_{\G})$ for $\G$}

In distribution class $\{ \G \}$ we use a modified mean estimator, called the Truncated Empirical Average (TEA) estimator \ashutosh{(cite)}. Various concentration bounds have been proven with TEA \ashutosh{(cite)}, but for $\{\G(\epsilon, B)\}$. We propose a similar concentration bounds for oblivious distribution class $\{ \G \}$. In this case, we require two functions $f:\mathbb{N} \rightarrow \mathbb{R}$ and $g: \mathbb{N} \rightarrow \mathbb{R}^{+}$ to make the concentration inequality oblivious to $B$ and $\epsilon$. The constraints require $f$ to be a monotonically increasing function and $g$ to be a monotonically decreasing function while still being greater than zero for all $t$. The inequality is true for $t>\max\{t_{min}, t_{max}\}$ where $t_{min}$ is the minimum $t$ such that $f(t)\geq B$ and $t_{max}$ is the maximum $t$ such that $g(t)\leq \epsilon$. We state the inequality and then the regret upper bound as a function of number of rounds.

\begin{lemma}
\label{heavytail-conc-inequality}
    Consider random variables $X_1, X_2, \dots, X_u \sim \G$ with mean $\mu$. For sufficiently large $t$, with a probability less than $\delta$, we have
    \begin{align*}
\mu - \hat{\mu}_T \geq 4 f(t)^{\frac{1}{1+g(t)}} \left( \frac{\log(\frac{2}{\delta})}{u} \right)^{\frac{g(t)}{1+g(t)}} \quad \text{where} \quad \hat{\mu}_T = \frac{1}{u} \sum_{i=1}^{u} X_i \mathbbm{1}_{\left\{|X_i|\leq \left( \frac{uf(t)}{\log(\delta^{-1})} \right)^{\frac{1}{1+g(t)}} \right\}}
    \end{align*}
    \begin{align*}
\alternate{\mu - \hat{\mu}_T \geq \frac{1}{\log(f(t))} + f(t) \frac{\log\left( \frac{2}{\delta}\right)}{u} \quad \text{where} \quad \hat{\mu}_T = \frac{1}{u} \sum_{i=1}^{u} X_i \mathbbm{1}_{\left\{|X_i|\leq f(t) \right\}}}
    \end{align*}
\end{lemma}
We choose $\delta=t^{-4}$ to obtain width $\W$ and mean estimator $\hat{\mu}$ given in theorem \ref{general-upper-bound-thm}. Finally, we obtain the regret expression following property 2 of theorem \ref{general-upper-bound-thm}.

\begin{corollary}
\label{heavytail-regret-bound}
    \begin{align*}
        R_t(\pi_{\G_{TEA}}) \leq \sum_{j:\Delta_j>0} \left( 3\Delta_i + 8 \left( \frac{f(t)}{\Delta_i} \right) ^{\frac{1}{g(t)}} \log(2t^4) \right)
    \end{align*}
    \begin{align*}
        \alternate{R_t(\pi_{\G_{TEA}}) \leq \sum_{j:\Delta_j>0} \left( 3\Delta_i + 2\Delta_i \frac{f(t) \log\left(\frac{2}{\delta}\right)}{\Delta_i - \frac{2}{\log(f(t))}} \right) \quad \forall ~ t>t_{min}}
    \end{align*}
\end{corollary}

\alternate{We see that $t_{min}$ is the minimum $t$ for which denominator is positive, that is, $\Delta_i > \frac{2}{\log(f(t))}$. Also, we can see that the denominator increases with $t$ and asymptotically approaches $\Delta_i$. Hence, regret for $\G$ is just a little worse than $\log$ as we can choose an arbitrary slow increasing function $f(t)$. TODO: Talk about how it does not violate lower bound claimed in \citep{bubeck}}

\alternate{The following consistency argument will still be useful in the next section for MoM}

\textit{Consistency} is not immediately clear from the above regret expression. We now show that there exists appropriate choices of $f:\mathbb{N} \rightarrow \mathbb{R}$ and $g:\mathbb{N} \rightarrow \mathbb{R}^{+}$ so that the overall regret expression can be made as close to logarithmic as we want.
\begin{corollary}
\label{consistency-tea}
For every monotonic increasing function $\Phi:\mathbb{N} \rightarrow \mathbb{R}$, there exists a monotonic increasing $f:\mathbb{N} \rightarrow \mathbb{R}$, monotonic decreasing $g: \mathbb{N} \rightarrow \mathbb{R}^{+}$ and $t_1$ such that $\forall ~ \Delta_i, t>t_1$
\begin{align*}
    \left( \frac{f(t)}{\Delta_i} \right)^{\frac{1}{g(t)}} \leq \Phi(t) 
\end{align*}
\end{corollary} 
\textit{Proof:} We see that $e^{0.5(\log\Phi(t))^{1-c}}$ is an increasing function for $c \in (0, 1)$. Hence, we choose $f(t) = e^{0.5(\log\Phi(t))^{1-c}}$ and $g(t) = \frac{1}{\log^{c}(\Phi(t))}$. Also there exists $t_0$ such that for all $t>t_0$, $\frac{1}{\Delta_i} \leq e^{0.5(\log\Phi(t))^{1-c}}$ since LHS is a constant while RHS is an increasing function of $t$. Thus, we have,
\begin{align*}
    \frac{f(t)}{\Delta_i} \leq e^{(\log\Phi(t))^{1-c}}
\end{align*}
Again, there exists $t_1$ such that LHS (and hence RHS) is greater than 1. For such $t>t_1$, we have,
\begin{align*}
    \left( \frac{f(t)}{\Delta_i} \right)^{\frac{1}{g(t)}} \leq \left(e^{(\log\Phi(t))^{1-c}}\right)^{\log^{c}(\Phi(t))} = \Phi(t)
\end{align*}

%
%

\subsection{Fixed-horizon algorithm $\pi_{\G_{MoM}}$ for $\G$}
We now introduce a fixed-horizon oblivious algorithm for $\G$ with median of means as the mean estimator similar to  \citet{bubeck}, \citet{median-of-means}. Median of means estimator also assumes a weak assumption on the distribution that the central $(1+\epsilon)^{th}$ moment is finite. Similar to TEA, we use two functions 
$f:\mathbb{N} \rightarrow \mathbb{R}$ and $g: \mathbb{N} \rightarrow \mathbb{R}^{+}$ to make the concentration inequality oblivious to $\epsilon$ and the value of the upper bound on the central moment $B$. This estimator has a slightly lower $t_{min}$ since in this case, we require $f(t_{min})$ to be more than the central moment and not the absolute moment. We use a similar notation as the any-time version, except for the fact that we now use the horizon $n$ as the input to the algorithm. We replace the number of rounds $t$ with the horizon $n$ in the notation, thus $\U(i, u, n)$, $\hat{\mu}(i, u, n)$ and $\W(u, n)$ denoting the upper confidence bound, mean estimator and the confidence width. Since this is a batch algorithm, we choose one arm and pick it for $q=\floor{8\log\left(\frac{1}{\delta}\right)}$ times. We denote $v$ as the number of batches an arm has been pulled for, that is, $u=vq$. We now state a theorem similar to the any-time version.
\begin{theorem}
\label{mom-main-theorem}
Consider algorithm $\pi_{\G_{MoM}}$ We make the following choices for mean estimator $\hat{\mu}(i, u, n)$ and width $\W(u, n)$. \ashutosh{replace $\delta$ by n} 
\begin{align*}
\hat{\mu}(i, u, n) &= \text{Median}\{\hat{\mu}_1, \hat{\mu}_2, \dots, \hat{\mu}_q\} \quad \text{where} \quad \hat{\mu}_l = \frac{1}{N} \sum_{j=(l-1)N+1}^{lN} X_j\\
\W(u, n) &= (12f(n))^{\frac{1}{1+g(n)}} \left(\frac{8\log\left(\frac{1}{\delta}\right)}{u} \right)^{\frac{g(t)}{1+g(t)}}
\end{align*}
\alternate{
\begin{align*}
\W(u, n) &= f(n) \left(\frac{8\log\left(\frac{1}{\delta}\right)}{u} \right)^{g(t)}
\end{align*}
}
The following properties holds true for the algorithms:
\begin{enumerate}
    \item There exists $t_{min}$ such that for all $n>t_{min}$, $\mathbb{P}\left( |\hat{\mu}(i, u, n) - \mu_{\{i,u,n\}}| \geq \W(u, n) \right) \leq n^{-2}$
    \item $\mathbb{E}[T_i(n)] \leq u_i + 2$ where $u_i$ is the minimum $u$ satisfying $\Delta_i \geq 2\W(u, n)$
\end{enumerate}
\end{theorem}

\begin{lemma}
\label{mom-conc-inequality}
    Let $X_1, X_2, \dots, X_n$ be finite mean($\mu$) random variables satisfying $\mathbb{E}[(X-\mu)^{1+\epsilon}] < u$. Consider an increasing function $f$ and a decreasing but positive function $g$, ($g(x)>0 ~ \forall x$). Then for sufficiently large $t$,
    \begin{align*}
        \mathbb{P}\left( \hat{\mu}_M - \mu \geq (12f(t))^{\frac{1}{1+g(t)}}\left( \frac{1}{N} \right)^{\frac{g(t)}{1+g(t)}} \right) \leq \delta
    \end{align*}
    \begin{align*}
        \alternate{\mathbb{P}\left( \hat{\mu}_M - \mu \geq f(t) \left( \frac{1}{N} \right)^{g(t)} \right) \leq \delta}
    \end{align*}
where $\hat{\mu}_M$ is the Median of Means (MoM) estimator which is defined as 
\begin{align*}
    \hat{\mu}_M = \text{Median}\{\hat{\mu}_1, \hat{\mu}_2, \dots, \hat{\mu}_q\}
    \end{align*}
    \begin{align*}
    \text{where}\quad \hat{\mu}_1 = \frac{1}{N} \sum_{i=1}^{N} X_i,\quad \hat{\mu}_2 = \frac{1}{N} \sum_{i=N+1}^{2N} X_i,\quad \dots,\quad \hat{\mu}_q = \frac{1}{N} \sum_{i=(q-1)N+1}^{qN} X_i
\end{align*}
Here, $q = \floor*{8\log\left(\frac{1}{\delta}\right)}$ is the number of bins and each bin has $N = \floor*{n/q} $ samples. 
\end{lemma}

%
%
%
%

\begin{algorithm}[H]
\label{mom-main-loop}
  \caption{Fixed budget UCB Algorithm (Median of Means Estimator)}
   \textbf{Input} $k$ arms $a_1$, $a_2$, $\dots$, $a_k$, budget $T$, batch size $q$
  \begin{algorithmic}
    \State Calculate number of batches $m = \frac{T}{q}$
    \For  {$t=1$ to $k$}
    \State pull $t^{th}$ arm $q$ times and observe rewards $\{X_{i1}, X_{i2}, \dots, X_{iq}\}$
    \State Define $S_i = \{X_{i1}, X_{i2}, \dots, X_{iq}\} $ and find $\hat{\mu}(i, u, t)$ using Algorithm \ref{mom-function}
    \State Update $\W(u, t)$, and thus, $\U(i, u, t)$
    \EndFor 
    \For { $t=k+1$ to $m$ } 
	\State pull arm $i$ minimizing $\U(i, u, t) ~ q$ times and observe rewards $\{X_{i1}, X_{i2}, \dots, X_{iq}\}$
	\State Append $\{X_{i1}, X_{i2}, \dots, X_{iq}\}$ to $S_i$ and update $\hat{\mu}(i, u, t)$ using Algorithm \ref{mom-function}
	\State Update $\W(u, t)$, and $\U(i, u, t)$
    \EndFor
  \end{algorithmic}
\end{algorithm}

\begin{algorithm}[H]
\label{mom-function}
  \caption{Function to Calculate Median of Means (MoM)}
   \textbf{Input} $S_i = \{ X_{i1}, X_{i2}, \dots, X_{iq}, X_{i(q+1)}, \dots, X_{i(vq)} \}$ $v$ batches pulled with each having $q$ samples.
  \begin{algorithmic}
    \State Calculate $\hat{\mu}_1 = \frac{1}{v} \sum_{i=1}^{v} X_i$, $\hat{\mu}_2 = \frac{1}{v} \sum_{i=(v+1)}^{2v} X_i, \dots, \hat{\mu}_q = \frac{1}{v} \sum_{i=(q-1)v+1}^{qv} X_i $\\
    \textbf{Return} Median($\hat{\mu}_1, \hat{\mu}_2, \dots, \hat{\mu}_q$)
  \end{algorithmic}
\end{algorithm}

}

\section{Experimental Analysis}
\label{expt-analysis}


\begin{figure}[t!]
\centering
        \begin{subfigure}[b]{0.3\textwidth}
                \includegraphics[width=\linewidth]{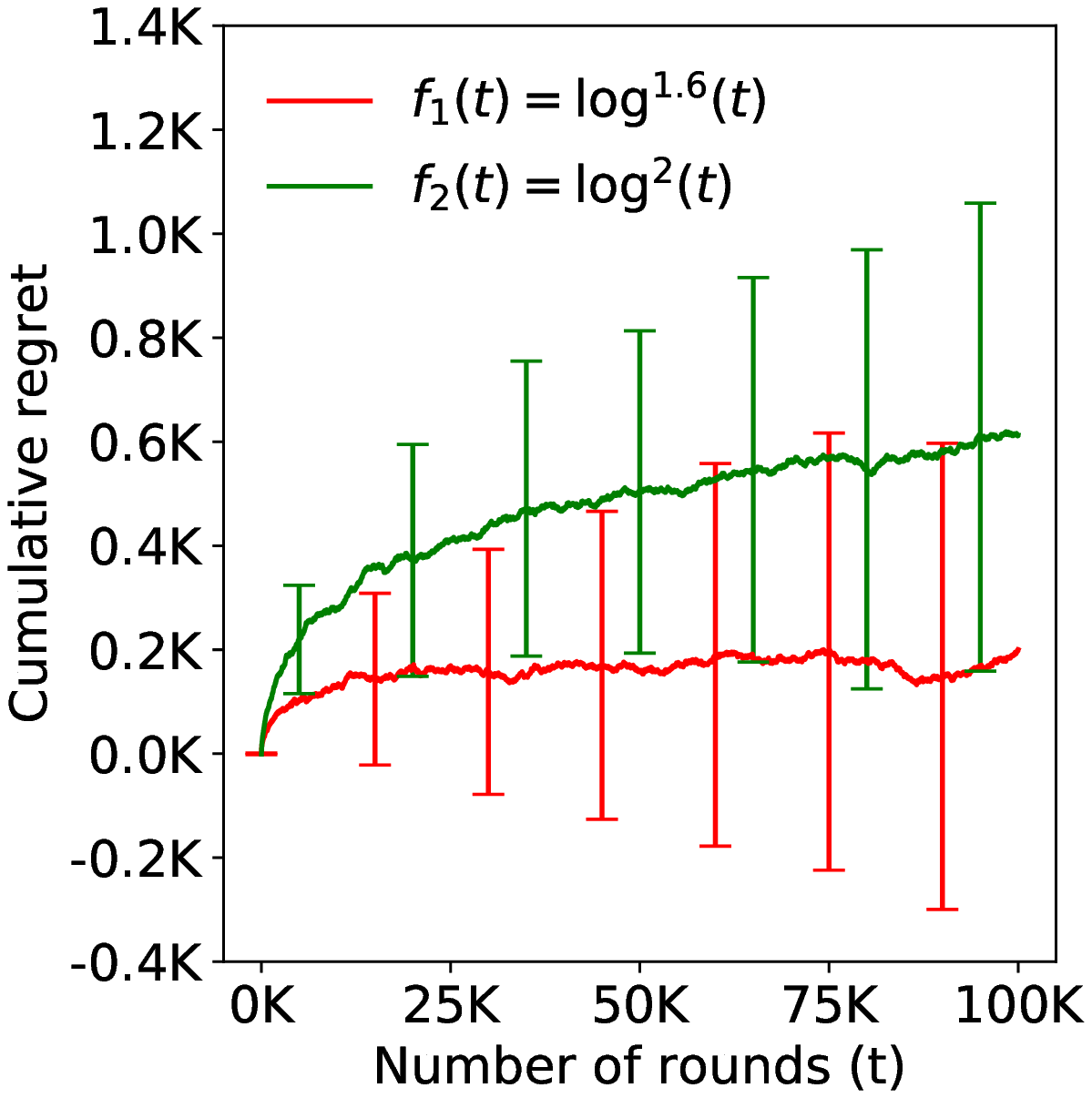}
                \caption{R-UCB: Comparison between  different scaling functions}
                \label{fig:R-UCB}
        \end{subfigure}\hspace{0.1 in}
        \begin{subfigure}[b]{0.3\textwidth}
                \includegraphics[width=\linewidth]{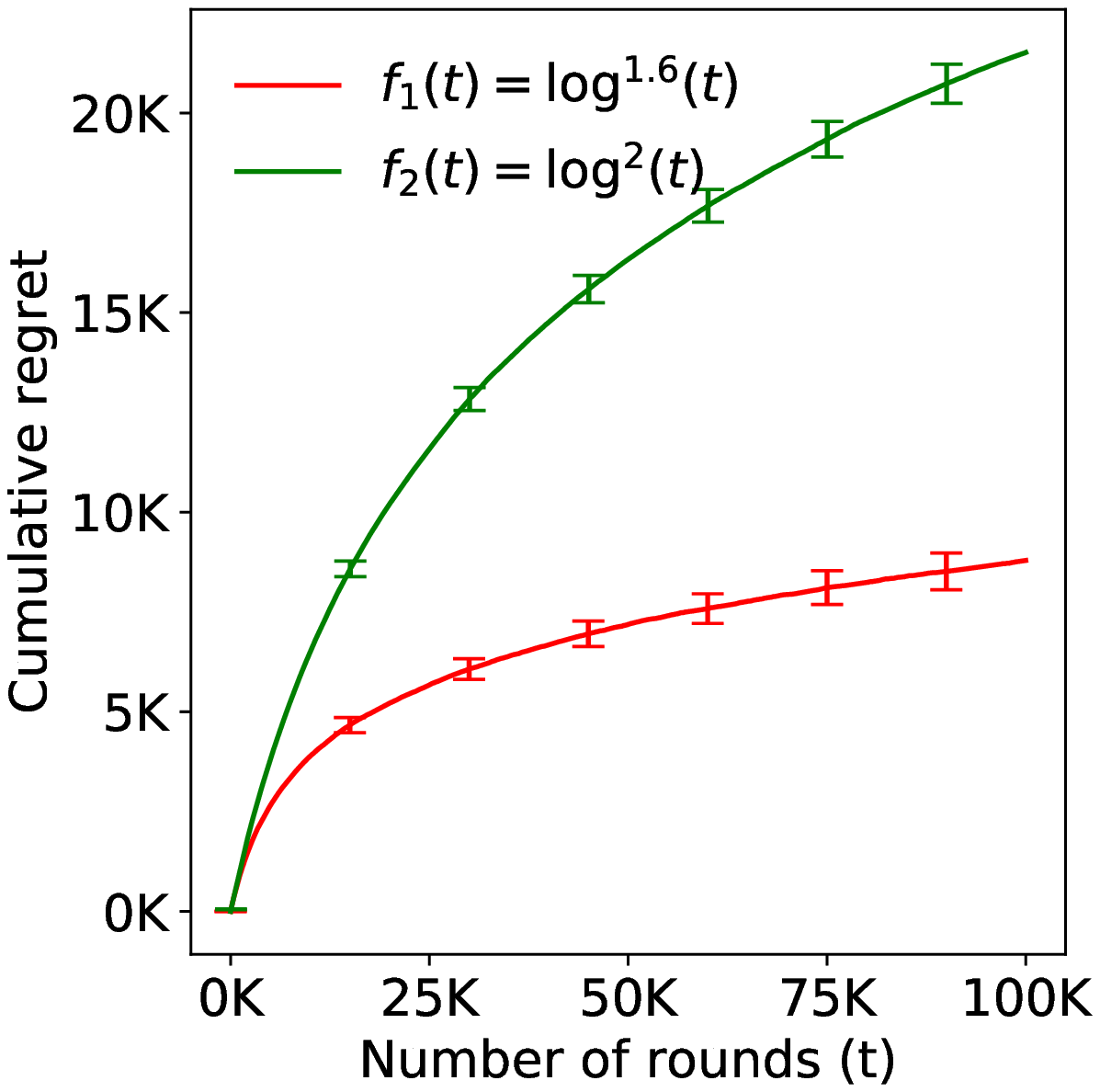}
                \caption{R-UCB-G: Comparison between different scaling functions}
                \label{fig:R-UCB-G}
        \end{subfigure}\hspace{0.1 in}
        \begin{subfigure}[b]{0.3\textwidth}
                \includegraphics[width=\linewidth]{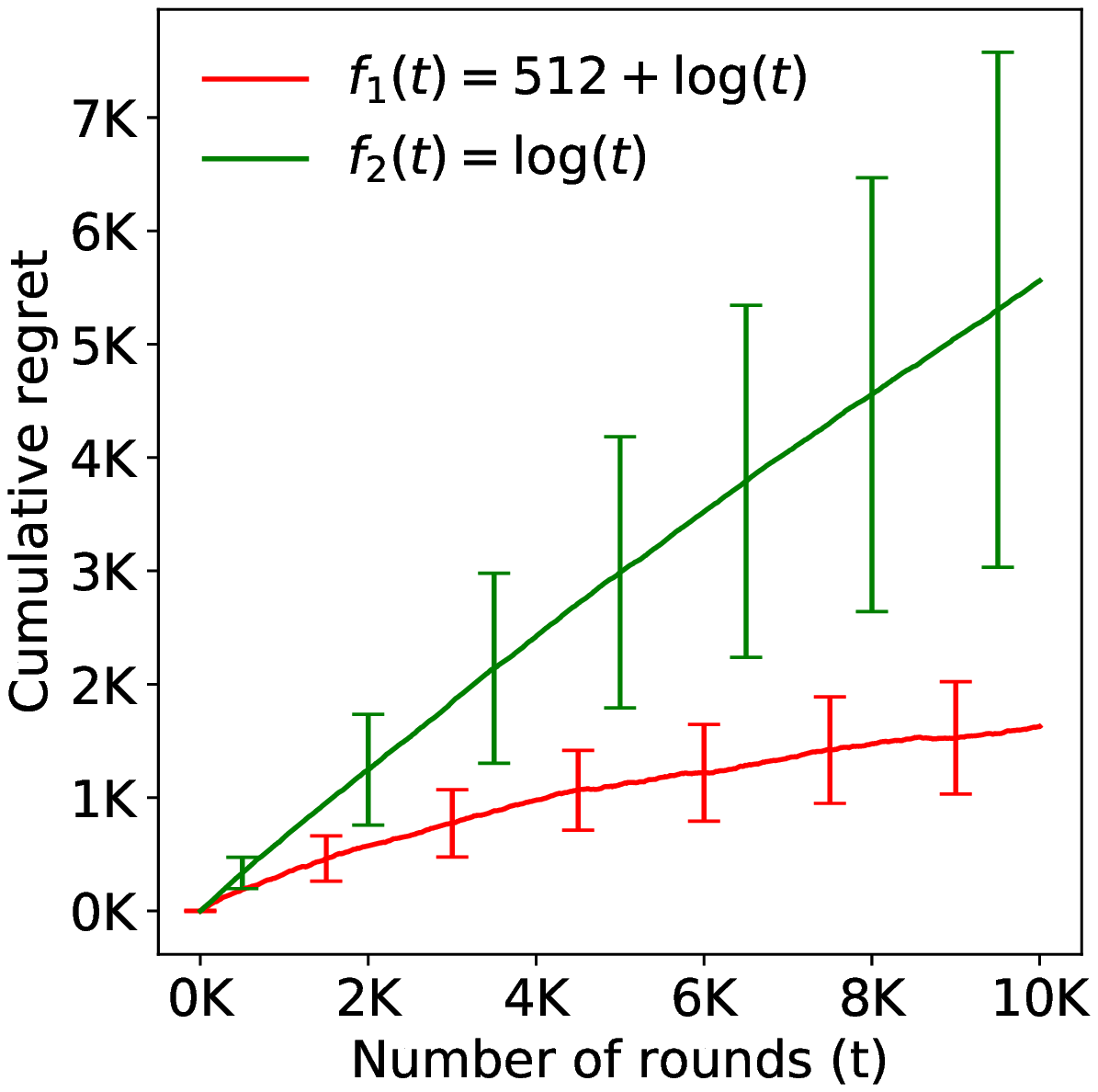}
                \caption{Using prior information to improve short-horizon regret}
                \label{fig:robust_comparison}
        \end{subfigure}%
        \label{fig:figures_main}
\end{figure}

In this section, we present numerical results to illustrate the
performance of the algorithms presented in Section~\ref{upper-bounds}.

In the first experiment, we demonstrate the effect of choice of
scaling function~$f$ on the cumulative regret. As per
Theorems~\ref{thm:sr-ucb-1} and~\ref{thm:sr-ucb-2}, the regret grows
faster asymptotically if we choose a faster growing~$f$. We
demonstrate this behavior for R-UCB and R-UCB-G in
Figures~\ref{fig:R-UCB} and~\ref{fig:R-UCB-G} respectively. The chosen
instance is as follows: two arms both distributed as Gaussian
$\mathcal{N}(\mu, \sigma)$ with parameters $(1.7, 1)$ and $(3.7,
3)$. This choice of parameters is arbitrary and a similar trend was
observed in trials with other Gaussian instances. The two chosen
scaling functions are $f_1(t) = \log^{1.6}(t)$, and
$f_2(t) = \log^{2}(t)$. The simulation is repeated 200 times for each
configuration and the empirical mean is plotted along with the
standard deviation in Figures~\ref{fig:R-UCB} and~\ref{fig:R-UCB-G}
for R-UCB and R-UCB-G, respectively. We note that the observed
cumulative regret corresponding to the faster growing $f_2(t)$ exceeds
that corresponding to $f_1(t)$ in both cases. Interestingly, this
dominance holds even for smaller horizon values, even though our
regret bounds suggests that tuning the scaling function for better
asymptotic performance might compromise short-horizon regret. This is
because our regret bounds (and UCB upper bounds in the literature most
generally) are fairly loose. Indeed, we also observe that the
cumulative regret in all the cases is well below the bounds presented
in Theorems~\ref{thm:sr-ucb-1} and~\ref{thm:sr-ucb-2}.  Also, the
regret of R-UCB is less than R-UCB-G for the same choice of~$f(t),$
which is reasonble considering we have used a light-tailed instance.

In the second experiment, we demonstrate how choosing $f(t)$ based on
(noisy) prior information can decrease regret over short horizons.
The chosen instance for this experiment is as follows: two arms both
distributed as Gaussian $\mathcal{N}(\mu, \sigma)$ with parameters
$(0, 1)$ and $(1, 10)$.
Now, suppose we have the (noisy) prior information that the arms are
$\sigma$-subGaussian with $\sigma \approx 8$. As stated in
Section~\ref{upper-bounds}, we incorporate this prior information into
the design of $f(t)$ by choosing $f_1(t) = 512 + \log(t)$. We compare
the cumulative regret for this choice with that corresponding to a
completely oblivious choice of $f(t)$, i.e., $f_2(t) = \log(t)$. The
experiment is repeated 200 times and obtained mean and standard
deviation of regret is shown in Figure \ref{fig:robust_comparison}. We
can see that $f_1(t)$, i.e., the scaling function chosen based on the
prior information, incurs lower regret.
This trend in cumulative regret can be reasoned as follows. The
algorithm using scaling function $f_2(t)$ uses smaller confidence
widths, which results in greater susceptibility to the noise in the
arm rewards.
In conclusion, if noisy prior information about the possible arm
distributions is available, this can be incorporated into the choice
of the scaling function to improve short-horizon performance, while
retaining statistical robustness.

\ignore{There are four plots. First, two figures analyze the variation
  of the threshold horizon with respect to the parameters of the
  underlying distribution. In the first plot, the variance is varied
  and the threshold horizon is plotted. The threshold is decided based
  on max between two terms, and the dotted vertical line represents a
  switch in the dominating term. It is important to note that before
  that switching point, the growth in the threshold is of the order of
  "f-inverse". However, after that switching point, the threshold
  grows doubly exponentially ($e^(f-inverse)$ to be precise). In
  conclusion, if we have a rough estimate of the parameters, we can
  choose an "f" such that the threshold horizon is less than what we
  can afford to have.  Plot 2 does a similar analysis by varying the
  sub-optimality gap. One interesting point in this is that there is
  an additional inequality concerning $\Delta_i$ due to which the
  threshold horizon for low $\Delta_i$ shoots up.

  The last two plots are regret plot averaged over 200 runs, first
  with light-tail algorithm and then heavy-tail algorithm. Heavy-tail
  has higher regret in this case (not true in general, heavy-tail will
  have lower regret when $\Delta_i < 0$). The standard deviation is
  high in both cases.}

\section{Concluding remarks}

In this paper, we demonstrated the fundamental trade-off between
logarithmic regret and statistical robustness in stochastic MABs. We
also proposed robust algorithms that incur slightly super-logarithmic
regret. 
It would be interesting to explore similar trade-offs between
statistical robustness and performance in other bandit settings,
including thresholding bandits \citep{Locatelli2016}, linear bandits
\citep{Rusmevichientong2010} and combinatorial bandits
\citep{Chen2013}. 

More broadly, we hope that this paper spawns further work on
statistically robust online learning algorithms. We have focussed on
one of the simplest learning paradigms (regret minimization in MABs),
where a logarithmic regret emerged as a robustly unattainable
performance barrier. Other fundamental performance barriers of
statistically robust learning await discovery, in more challenging
settings such as Markovian bandits and Markov Decision Processes.

\section*{Broader Impact}
This work does not present any foreseeable ethical or societal consequences.

\begin{ack}
Acknowledgment 
\end{ack}
\bibliography{ref_neurips}
\newpage
\begin{appendices}

\section{Appendix for Section 3 - Impossibility of logarithmic regret for statistically robust algorithms}
\label{lower-bounds-appendix}

This section is devoted to the proof of
Theorem~\ref{lower-bound-main-statement}. The proof is based on the
following characterization of instance-dependent lower bounds from
\citet{lattimore2018bandit} (see Theorem 16.2): 
\begin{theorem}
    For any algorithm
$\pi$ that is consistent over $\M^k,$ and instance $\nu \in \M^k,$
\begin{align*}
  \lim_{n \to \infty} \inf \frac{R_n(\pi,\nu)}{\log(n)} \geq \sum_{i: \Delta_i>0} \frac{\Delta_i}{d_i(\nu_i, \mu^*, \M)},
\end{align*}
where
$d_i(\nu_i, \mu^*, \M) := \inf_{\nu'_i \in \M}\{D(\nu_i,\nu'_i):\
\mu(\nu'_i) > \mu^*\}.$
\end{theorem}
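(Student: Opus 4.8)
The plan is to establish this instance-dependent lower bound via the classical change-of-measure argument, handling each suboptimal arm separately and then summing. Fix an instance $\nu \in \M^k$ and a suboptimal arm $i$ (so $\Delta_i > 0$). For any competing distribution $\nu'_i \in \M$ with $\mu(\nu'_i) > \mu^*$, I would construct an alternative instance $\nu'$ that agrees with $\nu$ on every arm except arm $i$, whose law is replaced by $\nu'_i$. In $\nu'$, arm $i$ becomes the unique optimal arm, so every arm $j \ne i$ is suboptimal there with gap at least $\mu(\nu'_i) - \mu^* > 0$. The guiding intuition is that a consistent algorithm cannot simultaneously pull arm $i$ rarely under $\nu$ (as it should, to keep regret low) and pull it often under $\nu'$ (as it must, since arm $i$ is now optimal), unless the two instances are statistically hard to tell apart --- and the distance between their induced laws is controlled by $\Exp{T_i(n)}$.

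The two key ingredients are standard. First, the divergence decomposition (see \citet{lattimore2018bandit}): writing $\mathbb{P}_\nu$ and $\mathbb{P}_{\nu'}$ for the laws of the observed history (actions and rewards) under the two instances, and using that $\nu$ and $\nu'$ differ only in arm $i$,
\[ D(\mathbb{P}_\nu, \mathbb{P}_{\nu'}) = \Exp{T_i(n)}\, D(\nu_i, \nu'_i). \]
Second, the Bretagnolle--Huber inequality: for any history-measurable event $A$,
\[ \mathbb{P}_\nu(A) + \mathbb{P}_{\nu'}(A^c) \geq \tfrac{1}{2}\exp\left(-D(\mathbb{P}_\nu, \mathbb{P}_{\nu'})\right). \]
I would apply this with $A = \{T_i(n) \geq n/2\}$, and then link each probability to the regret on the corresponding instance. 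Under $\nu$, arm $i$ is suboptimal, so $R_n(\pi,\nu) \geq \Delta_i \Exp{T_i(n)} \geq \Delta_i \tfrac{n}{2}\mathbb{P}_\nu(A)$, giving $\mathbb{P}_\nu(A) \le 2 R_n(\pi,\nu)/(n\Delta_i)$. Under $\nu'$, on the event $A^c$ at least $n/2$ pulls go to arms other than $i$, each incurring a gap of at least $\Delta'_{\min} := \mu(\nu'_i) - \mu^* > 0$, so $\mathbb{P}_{\nu'}(A^c) \le 2 R_n(\pi,\nu')/(n\Delta'_{\min})$.

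Combining these with Bretagnolle--Huber and the divergence decomposition, then taking logarithms and rearranging, yields
\[ \Exp{T_i(n)}\, D(\nu_i, \nu'_i) \geq \log\left(\tfrac{n}{4}\right) - \log\left(\frac{R_n(\pi,\nu)}{\Delta_i} + \frac{R_n(\pi,\nu')}{\Delta'_{\min}}\right). \]
Here I crucially invoke that $\pi$ is consistent over $\M^k$ and that \emph{both} $\nu, \nu' \in \M^k$, so $R_n(\pi,\nu)$ and $R_n(\pi,\nu')$ are $o(n^a)$ for every $a > 0$. Dividing by $\log n$, the second logarithmic term is then at most $a\log n + O(1)$, so taking $\liminf$ and letting $a \downarrow 0$ gives $\liminf_n \Exp{T_i(n)}/\log n \geq 1/D(\nu_i,\nu'_i)$. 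Taking the infimum over admissible $\nu'_i$ converts the right-hand side into $1/d_i(\nu_i,\mu^*,\M)$; finally, multiplying by $\Delta_i$ and summing over suboptimal arms, using $R_n(\pi,\nu) = \sum_i \Delta_i \Exp{T_i(n)}$, delivers the stated bound.

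The divergence decomposition and Bretagnolle--Huber are routine black boxes. The main obstacle is the joint use of consistency on both $\nu$ and $\nu'$ together with the asymptotic bookkeeping: the event $A$ must be chosen so that $\mathbb{P}_\nu(A)$ and $\mathbb{P}_{\nu'}(A^c)$ are \emph{simultaneously} controlled by the (sub-polynomial) regrets, and one must manage the $\liminf$ and the limit $a \downarrow 0$ so that the $\log(\cdot)$ penalty is asymptotically negligible against $\log n$. Care is also needed to guarantee $\Delta'_{\min} > 0$, which holds precisely because $\nu'_i$ is chosen with $\mu(\nu'_i) > \mu^*$.
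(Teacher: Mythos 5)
Your proposal is correct, and it coincides with the paper's treatment: the paper does not prove this statement itself but invokes it as Theorem 16.2 of \citet{lattimore2018bandit}, whose proof is exactly your argument---the divergence decomposition $D(\mathbb{P}_\nu,\mathbb{P}_{\nu'})=\Exp{T_i(n)}D(\nu_i,\nu'_i)$, the Bretagnolle--Huber inequality applied to the event $\{T_i(n)\ge n/2\}$, Markov-type bounds linking both probabilities to the regrets of the two instances, and consistency of $\pi$ on both $\nu$ and the perturbed $\nu'\in\M^k$ to kill the logarithmic penalty term before taking $a\downarrow 0$, the infimum over $\nu'_i$, and the sum over suboptimal arms. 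No substantive difference from the cited proof.
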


The proof of Theorem~\ref{lower-bound-main-statement} therefore
follows from the following lemma, which shows that
$d_i(\nu_i, \mu^*, \M) = 0$ for all suboptimal arms of any instance
$\nu$ when $\M$ is $\B,$ $\SG,$ $\SE,$ or $\G.$
\begin{lemma}
\label{d_zero_lemma}
Fix $\M \in \{\B,\SG, \SE, \G\}.$ For any distribution $F \in \M,$ and
for any $a>0$ and $b>\mu(F)$, there exists distribution
$F' \in \M$ such that
\begin{align*}
  D(F,F') \leq a \quad \text{ and } \quad \mu(F') \geq b.
\end{align*}
\end{lemma}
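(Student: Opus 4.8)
The plan is to construct $F'$ as a small perturbation of $F$ that relocates a tiny sliver of probability mass arbitrarily far to the right. Concretely, I would take
\[
F' = (1-\epsilon)\,F + \epsilon\,\delta_M,
\]
where $\delta_M$ is the Dirac measure at a point $M$, and $\epsilon \in (0,1)$ and $M > 0$ are parameters to be fixed. The intuition is that a small mixing weight $\epsilon$ keeps $F'$ close to $F$ in relative entropy, while pushing $M$ far out inflates the mean of $F'$ as much as desired; the two requirements decouple, so I can satisfy the KL constraint first and then the mean constraint independently.

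For the relative entropy, I would first note that $F \ll F'$ (any $F'$-null set is $F$-null, since $F' \ge (1-\epsilon)F$), and then bound the Radon--Nikodym derivative pointwise. Writing densities against the dominating measure $\lambda = F + \delta_M$, one has $\frac{dF}{dF'} = \frac{p}{(1-\epsilon)p + \epsilon\,(d\delta_M/d\lambda)} \le \frac{1}{1-\epsilon}$ wherever $p = dF/d\lambda > 0$. Integrating $\log(dF/dF')$ against $dF$ then gives the clean, $M$-independent bound
\[
D(F,F') \le -\log(1-\epsilon).
\]
Hence choosing any $\epsilon \le 1 - e^{-a}$ (possible since $a > 0$) forces $D(F,F') \le a$.

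With $\epsilon$ now fixed, I would turn to the mean. Since every class under consideration consists of distributions with finite mean, $\mu(F') = (1-\epsilon)\mu(F) + \epsilon M$ is well defined and is an increasing affine function of $M$. Choosing $M \ge \bigl(b - (1-\epsilon)\mu(F)\bigr)/\epsilon$ makes $\mu(F') \ge b$, which settles the two analytic requirements.

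The step I expect to be the real obstacle is verifying that $F' \in \M$ for each of the four classes, i.e., that $\B,\SG,\SE,\G$ are all closed under mixing with a point mass. For $\B$ this is immediate, as the support of $F'$ is the union of a bounded set with $\{M\}$; for $\G$ it is routine, since if $\int |x|^{1+\epsilon'}\,dF < \infty$ then $\int |x|^{1+\epsilon'}\,dF' = (1-\epsilon)\int |x|^{1+\epsilon'}\,dF + \epsilon|M|^{1+\epsilon'} < \infty$ for the same $\epsilon'$. The care is needed for the moment-generating-function classes $\SG$ and $\SE$: here I would invoke the standard fact that a finite mixture of subGaussian (resp.\ subexponential) distributions is again subGaussian (resp.\ subexponential). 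A point mass is trivially subGaussian and subexponential, and the mixture's centered moment generating function is controlled by those of its components---finite on the intersection of their domains and dominated by a Gaussian (resp.\ subexponential) profile with a suitably enlarged parameter---so $F'$ lands in $\SG$ (resp.\ $\SE$). Combining these membership facts with the KL and mean bounds established above yields $F' \in \M$ with $D(F,F') \le a$ and $\mu(F') \ge b$, as required.
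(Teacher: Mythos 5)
Your proof is correct, but it takes a genuinely different route from the paper's. The paper never introduces a new atom: for $\M=\B$ it multiplies the CDF by $1-\gamma$ and appends a \emph{uniform} density on $(v,v']$, and for unbounded $F\in\{\SG,\SE,\G\}$ it invokes Lemma~1 of \citet{sandeep}, which reweights $F$ itself (CDF scaled by $e^{-a}$ to the left of a point $y$, tail mass beyond $y$ scaled up) precisely so that $F'$ retains the functional form of $F$ on each piece and membership $F'\in\M$ is immediate---at the cost of a two-case split and an external lemma. Your single construction $F'=(1-\epsilon)F+\epsilon\,\delta_M$ handles all four classes uniformly, and your KL analysis is sound: $F\ll F'$ since $F'\geq(1-\epsilon)F$, the bound $dF/dF'\leq(1-\epsilon)^{-1}$ holds $F$-a.e.\ (atoms included), giving the $M$-independent bound $D(F,F')\leq-\log(1-\epsilon)$, after which the mean constraint decouples exactly as you say. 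What your route buys is simplicity and unification; what it costs is that the entire weight of the argument shifts to the closure claim, which you assert as a ``standard fact'' for $\SG$ and $\SE$. That claim is true but deserves the two lines it takes: after recentering at the mixture's mean $\mu'=(1-\epsilon)\mu(F)+\epsilon M$, the mixture's tails are, for $t>|M-\mu'|$, just $(1-\epsilon)$ times Gaussian (resp.\ exponential) tails of a shifted copy of $F$, and the standard tail/MGF equivalences convert this back into a Bernstein-type MGF bound with an \emph{enlarged} variance proxy $\sigma'$ (resp.\ enlarged $(v',\alpha')$, using that the mixture MGF $(1-\epsilon)\mathbb{E}_F e^{\lambda X}+\epsilon e^{\lambda M}$ is finite on the same neighborhood $|\lambda|<1/\alpha$). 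Crucially, this works only because $\SG$ and $\SE$ are defined as unions over all parameters; your construction necessarily exits any fixed $\SG(\sigma)$ as $M$ grows---which is not a defect but the very phenomenon behind Theorem~\ref{lower-bound-main-statement}, and the same remark explains why the paper's authors took care to preserve the functional form of $F$ rather than prove a closure lemma. With the $\SG/\SE$ closure step written out, your proof is complete and arguably more elementary than the paper's.
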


\begin{proof}
  
  We consider the following two cases.
  
\textbf{Case 1:} $\M \in \{ \SG, \SE, \G \}$

If the distribution $F$ is unbounded from above (i.e.,
$\bar{F}(y) > 0$ for all $y \in \R$), then the claim follows from
Lemma 1 in \citet{sandeep}. The idea there is to construct a new
distribution $F'$ such that for a chosen $y$, the CDF on the left
side is decreased by a factor of $e^{-a}$ with respect to $F,$
and rest of the mass is pushed on the right side of $y$. Crucially,
under this perturbation, $F'$ remains in $\M$, since on both sides of $y$ only a constant is being multiplied, thus keeping the functional form of the distribution same. The KL-divergence
$D(F,F')$ is always less than $a$ independent of the choice
of $y$. However, the mean of $F'$ can be made arbitrary large by
choosing a suitably large value of $y$.

On the other hand, if $F$ is bounded from above, then the
argument below (for the case $\M = \B$) can be applied to construct
$F'$ that is also bounded from above, but satisfies the
conditions required. (Specifically, the boundedness of the lower
end-point of the support is not required for this argument.)

\textbf{Case 2:} $\M = \B$

We construct a new bounded distribution $F'$ such that the CDF of
$F'$ is $e^{-a}$ times the CDF of $F$ over its support. The
rest of the probability mass is uniformly distributed starting from
the right end-point of the support to an arbitrary point $v'$.

Suppose that the support of $F$ is contained within $[u,v].$
Define the CDF of distribution $F'$ as follows, for $\gamma \in (0,1)$
and $v' > v.$
\begin{align*}
  F'(x) &= (1-\gamma) F(x) \qquad \forall \ x \leq v\\
  F'(x) &= 1 + \gamma \frac{x-v'}{v'-v} \qquad \forall \ x \in (v,v']
\end{align*}

Now, 
\begin{align*}
  D(F, F') = \int_{u}^{v} \log \left(\frac{dF(x)}{dF'(x)}\right) dF(x) = -\log(1-\gamma).
\end{align*}
Choosing $\gamma = 1-e^{-a}$ yields $D(F, F') =
a$. Turning now to the mean of $F',$
\begin{align*}
	\mu(F') &= \int_{u}^{v'} x dF'(x) = (1-\gamma) \mu(F) + \int_{v}^{v'} x \frac{\gamma}{v'-v} dx\\
	&= (1-\gamma) \mu(F) + \frac{\gamma}{2} (v'+v)
\end{align*}
Clearly,
$\mu(F')$ can be made arbitrarily large by choosing a suitably
large $v'.$

\end{proof}

\section{Proof of Theorem \ref{thm:sr-ucb-1} - Regret Upper Bound for R-UCB}
\label{proof-sr-ucb-1}

We formally prove theorem \ref{thm:sr-ucb-1} in this section. The
prove is structurally similar to the bandit regret proof presented in
\citet*{bubeck}. We will show regret bound for the two cases
$\nu \in \SG^k$, and, $\nu \in \SE^k$.
\begin{proof} We first prove for $\nu \in \SE^k$ and then the other case follows.

\textbf{Case 1} ~~ $\nu \in \SE^k$

We define the following three events for any sub-optimal arm $i$.

\begin{align*}
	E_1~&: \qquad \U(i^*, T_{i^*}(t-1), t) \leq \mu^*\\
	E_2&: \qquad \hat{\mu}(i, T_{i}(t-1)) > \mu_i + \W(T_{i}(t-1), t)\\
	E_3&: \qquad \Delta_i < 2\W(T_{i}(t-1), t)
\end{align*}

where $T_i(t)$ denotes the number of times $i^{th}$ arm is pulled till time instant $t$. The three events can be interpreted as follows. Event $E_1$ occurs when the upper confidence bound corresponding to the optimal arm is less than its actual mean. Event $E_2$ corresponds to the case when the mean estimator of a sub-optimal arm is much more than its actual mean. As we shall see, both $E_1$ and $E_2$ are low-probability event and its probability can be upper bounded. Finally, event $E_3$ corresponds to the case when the confidence window of arm $i$ is large. We now prove that one of these event must be true when a sub-optimal arm is chosen at time instant $t$. Denote $I_t$ as the arm chosen at time $t$.

\emph{Claim} ~~ If $I_t = i$, then one of $E_1, E_2$ or $E_3$ is true.

To justify this claim, we assume all the three events to be false and then show a contradiction.

We have,
\begin{align*}
	\U(i^*, T_{i^*}(t-1), t) &> \mu^*\\
	&= \mu_i + \Delta_i\\
	&\geq \mu_i + 2\W(T_{i}(t-1), t)\\
	&\geq \hat{\mu}(i, T_{i}(t-1)) + \W(T_{i}(t-1), t)\\
	&= \U(i, T_{i}(t-1), t)
\end{align*}
which is a contradiction since $I_t \neq i^*$.

We now show a distribution oblivious concentration inequality for each $\nu \in \SE^k$. This inequality will be useful in upper bounding probability of events $E_1$ and $E_2$.

By our choice of algorithm

$$\hat{\mu}(i, u) = \frac{1}{u} \sum_{j=1}^{u} X_j ~\text{;}~~~~ \W(u, t) = \sqrt{\frac{f(t)\log(t)}{u}} $$

We assume the underlying distribution to be $\nu \in \SE(v, \alpha)^k$. For any confidence width $\W$, we have the following concentration inequality (see equation 2.18 in \citet*{wainwright2019high})

$$ \mathbb{P}\left( \frac{1}{u} \sum_{j=1}^{u} X_j - \mu \geq \W \right) \leq \text{exp} \left( - \min \left\{ \frac{u\W^2}{2v^2}, \frac{u\W}{2\alpha} \right \} \right) $$

We are interested only in small values of the confidence window $\W$, and hence the first term in the minimum expression is of interest to us. For the first term to be less than the second term, we have the following inequality
$$ \W \leq \frac{v^2}{\alpha} $$

Putting the value of confidence window $\W(u, t)$ in this inequality, we get,
$$ u \geq f(t)\log(t) \left( \frac{\alpha}{v^2} \right)^2 $$

Denote the minimum $u$ satisfying this inequality as $u_0$. Hence for all $u>u_0$ we have,

$$ \mathbb{P}\left( \hat{\mu}(i^*, u) + \W(u, t) > \mu^* \right) \leq  \text{exp} \left( \frac{-f(t)\log(t)}{2v^2} \right) $$

Since $f(t)$ is a sub-linearly growing function, for all time $t>t_0$, we are guaranteed to have $f(t)>8v^2$, where $t_0 = f^{-1}(8v^2)$. Substituting this inequality in the above expression yields,

$$ \mathbb{P}\left( \hat{\mu}(i^*, u) + \W(u, t) > \mu^* \right) \leq  \text{exp} \left( -4\log(t) \right) = t^{-4} $$

This expression establishes a distribution oblivious inequality for subexponential random variables. This inequality is valid for all time instances $t>t_0$ and $u > u_0$, where $t_0$ is a distribution dependent constant parameter while $u_0$ depends on the distribution as well as the choice of $f$. In addition, $u_0$ is an increasing function with number of rounds $t$.

This inequality is useful in establishing an upper bound on the probability of events $E_1$ and $E_2$. We have,

\begin{align*}
\mathbb{P}(E_1) &\leq \mathbb{P}(\exists u \in [t]: \U(i^*, u, t) \leq \mu^*) \leq t.t^{-4} = t^{-3} ~~ \text{by union bound over } u
\end{align*}

Similarly, $\mathbb{P}(E_2) \leq t^{-3}$.


Let $u'_i$ denote the maximum value of $T_i(t-1)$ for which event $E_3$ is true. Consequently, for all $t>u'_i$ and $u>u_0$, if $I_t=i$, then at least one of the event $E_1, $ $E_2$ is true. Finally, we choose $u_i = \max (u'_i, u_0, t_0)$ since we wish to apply the above concentration inequality for all time instances $t>u_i$.

Now, for any sub-optimal arm $i$,

\begin{align*}
\mathbb{E}[T_i(t)] &= \mathbb{E} \left[ \sum_{s=1}^{t} \mathbbm{1}\{ I_t = i \} \right]\\
& \leq u_i + \mathbb{E} \left[ \sum_{s=u_i+1}^t \mathbbm{1}\{I_t = i\} \right]\\
&= u_i + \mathbb{E} \left[ \sum_{s=u_i+1}^t \mathbbm{1}\{ I_t = i, E_1 \text{ true or } E_2 \text{ true} \} \right]\\
& \leq u_i + \sum_{s=u_i+1}^t \mathbb{P}(E_1 \cup E_2)\\
& \leq u_i + \sum_{s=u_i+1}^t \frac{2}{s^3} \leq u_i + 4
\end{align*}

Evaluating the value of $u_i$, we get

$$ u_i = \max \left \{ \frac{4f(t)\log(t)}{\Delta_i^2}, f(t)\log(t) \left( \frac{\alpha}{v^2} \right)^2, t_0 \right \}$$

%

However, we observe that $t_0$ is a constant and thus the first two terms ($u'_i, u_0$) will be more than $t_0$ after a time instance, say $t_1$. Hence,
$$ \mathbb{E}[T_i(t)] \leq \max\left\{\frac{4f(t)\log(t)}{\Delta_i^2}, f(t)\log(t)\left(\frac{\alpha}{v^2}\right)^2\right \} + 4 \quad \forall t > t^{\SE}_{min}(\nu) $$

where the instance dependent threshold $t^{\SE}_{min}(\nu) = \max(t_0, t_1)$.

Thus, we get the regret upper bound as 
$$  \boxed{R_t(\nu) \leq \sum_{i:\Delta_i>0} \left( f(t)\log(t) ~ \max \left\{ \frac{4}{\Delta_i}, \Delta_i \left( \frac{\alpha}{v^2} \right)^2 \right\} + 4\Delta_i \right) \quad \forall t>t^{\SE}_{min}(\nu)}$$  

\textbf{Case 2} ~~ $\nu \in \SG^k$

We observe that, $\SG$ is a special case of $\SE$ with $\alpha \rightarrow 0$. And hence, the regret expression can be obtained as

$$  \boxed{R_t(\nu) \leq \sum_{i:\Delta_i>0} \left( \frac{4f(t)\log(t)}{\Delta_i} + 4\Delta_i \right) \quad \forall t>t^{\SG}_{min}(\nu)}$$  

where the instance dependent threshold $t^{\SG}_{min} = \max(t_0, t_1)$ with $t_0$ and $t_1$ same as the previous case.

\ignore{

\textbf{Case 1} ~~ $\nu \in \B^k$

By our choice of algorithm,

$$\hat{\mu}(i, u) = \frac{1}{u} \sum_{j=1}^{u} X_j ~\text{;}~~~~ \W(u, t) = \sqrt{\frac{f(t)\log(t)}{u}} $$

We assume the underlying distribution to be $\nu \in \B([a, b])^k$. Next, using Hoeffding inequality for bounded random variables we get,

$$ \mathbb{P}\left( \hat{\mu}(i^*, u) + \W(u, t) > \mu^* \right) \leq  \text{exp} \left( \frac{-2f(t)\log(t)}{(b-a)^2} \right) $$

Since, $f(t)$ is a sub-linearly growing function, for all time $t > t''_0$, we are guaranteed to have $ f(t) > 2(b-a)^2 $, where $t''_0 = f^{-1}(2(b-a)^2)$. Substituting this inequality in the above concentration inequality, we get,

$$ \mathbb{P}\left( \hat{\mu}(i^*, u) + \W(u, t) > \mu^* \right) \leq \text{exp} (-4\log(t)) = t^{-4} $$

This expression establishes a distribution oblivious inequality for bounded random variables. It is important to note that this inequality is valid for all time instances $t>t''_0$, where $t''_0$ is a distribution dependent constant parameter. 

This inequality is useful in establishing an upper bound on the probability of events $E_1$ and $E_2$. We have,
\begin{align*}
\mathbb{P}(E_1) &\leq \mathbb{P}(\exists u \in [t]: \U(i^*, u, t) \leq \mu^*) \leq t.t^{-4} = t^{-3} ~~ \text{by union bound over } u
\end{align*}

The upper bound on the event $E_2$ follows similarly, thus $\mathbb{P}(E_2) < t^{-3}$.

Having established distribution oblivious concentration inequality, we proceed to find the regret upper bound. Let $u'_i$ denote the maximum value of $T_i(t-1)$ for which event $E_3$ is true. Consequently, for all $t>u'_i$, if $I_t=i$, then at least one of the event $E_1, $ $E_2$ is true. Finally, we choose $u_i = \max (u'_i, t''_0)$ since we wish to apply the above concentration inequality for all time instances $t>u_i$.

We have, for any sub-optimal arm $i$,

\begin{align*}
\mathbb{E}[T_i(t)] &= \mathbb{E} \left[ \sum_{s=1}^{t} \mathbbm{1}\{ I_t = i \} \right]\\
& \leq u_i + \mathbb{E} \left[ \sum_{s=u_i+1}^t \mathbbm{1}\{I_t = i\} \right]\\
&= u_i + \mathbb{E} \left[ \sum_{s=u_i+1}^t \mathbbm{1}\{ I_t = i, E_1 \text{ true or } E_2 \text{ true} \} \right]\\
& \leq u_i + \sum_{s=u_i+1}^t \mathbb{P}(E_1 \cup E_2)\\
& \leq u_i + \sum_{s=u_i+1}^t \frac{2}{s^3} \leq u_i + 4
\end{align*}

The value of $u_i$ can be found out from the definition of event $E_3$ as
$$u_i = \max \left \{ \frac{4f(t)\log(t)}{\Delta_i^2}, t''_0  \right \}$$

However, we observe that $t''_0$ is a constant and thus the first term $u'_i$ will be more than $t''_0$ after a time instance, say $t''_1$. Hence,
$$ \mathbb{E}[T_i(t)] \leq \frac{4f(t)\log(t)}{\Delta_i^2} + 4 \quad \forall t > t''_{min}(\nu) $$

where the instance dependent threshold $t''_{min}(\nu) = \max(t''_0, t''_1)$.

Thus, we get the regret upper bound as 
$$ \boxed{R_t(\nu) \leq \sum_{i:\Delta_i>0} \left( \frac{4f(t)\log(t)}{\Delta_i} + 4\Delta_i \right) \quad \forall t>t''_{min}(\nu)} $$

\textbf{Case 2} ~~ $\nu \in \SG^k$

By our choice of algorithm

$$\hat{\mu}(i, u) = \frac{1}{u} \sum_{j=1}^{u} X_j ~\text{;}~~~~ \W(u, t) = \sqrt{\frac{f(t)\log(t)}{u}} $$

We assume the underlying distribution to be $\nu \in \SG(\sigma)$. Next, we apply concentration inequality for subgaussian random variables with parameter $\sigma$.

$$ \mathbb{P}\left( \hat{\mu}(i^*, u) + \W(u, t) > \mu^* \right) \leq  \text{exp} \left( \frac{-f(t)\log(t)}{2\sigma^2} \right) $$

Again, since $f(t)$ is a sub-linearly growing function, for all time $t>t'_0$, we are guaranteed to have $f(t)>8\sigma^2$, where $t'_0 = f^{-1}(8\sigma^2)$. Substituting this inequality in the above expression yields,

$$ \mathbb{P}\left( \hat{\mu}(i^*, u) + \W(u, t) > \mu^* \right) \leq  \text{exp} \left( -4\log(t) \right) = t^{-4} $$

This expression establishes a distribution oblivious inequality for subgaussian random variables. This inequality is valid for all time instances $t>t'_0$, where $t'_0$ is a distribution dependent constant parameter.

This inequality is useful in establishing an upper bound on the probability of events $E_1$ and $E_2$, similar to case 1. We have,

\begin{align*}
\mathbb{P}(E_1) &\leq \mathbb{P}(\exists u \in [t]: \U(i^*, u, t) \leq \mu^*) \leq t.t^{-4} = t^{-3} ~~ \text{by union bound over } u
\end{align*}

Similarly, $\mathbb{P}(E_2) < t^{-3}$.

Now, we proceed to obtain regret upper bound similar to the previous case. We define $u'_i$ as the maximum value of $T_i(t-1)$ for which event $E_3$ is true. Also, we wish to apply concentration bound for all time instants $t>u_i$. Consequently, we choose $u_i = \max(u'_i, t'_0)$.

Similar to the previous case, we get,

$$\mathbb{E}[T_i(t)] \leq u_i + 4 \quad \text{where}~~u_i = \max\left\{  \frac{4f(t)\log(t)}{\Delta_i^2}, t'_0\right\}$$

However, we observe that $t'_0$ is a constant and thus the first term $u'_i$ will be more than $t'_0$ after a time instance, say $t'_1$. Hence,
$$ \mathbb{E}[T_i(t)] \leq \frac{4f(t)\log(t)}{\Delta_i^2} + 4 \quad \forall t > t'_{min}(\nu) $$

where the instance dependent threshold $t'_{min}(\nu) = \max(t'_0, t'_1)$.

Thus, we get the regret upper bound as 
$$  \boxed{R_t(\nu) \leq \sum_{i:\Delta_i>0} \left( \frac{4f(t)\log(t)}{\Delta_i} + 4\Delta_i \right) \quad \forall t>t'_{min}(\nu)}$$  

\textbf{Case 3} ~~ $\nu \in \SE^k$

By our choice of algorithm

$$\hat{\mu}(i, u) = \frac{1}{u} \sum_{j=1}^{u} X_j ~\text{;}~~~~ \W(u, t) = \sqrt{\frac{f(t)\log(t)}{u}} $$

We assume the underlying distribution to be $\nu \in \SE(v, \alpha)^k$. For any confidence width $\W$, we have the following concentration inequality (see equation 2.18 in \citet*{wainwright2019high})

$$ \mathbb{P}\left( \frac{1}{u} \sum_{j=1}^{u} X_j - \mu \geq \W \right) \leq \text{exp} \left( - \min \left\{ \frac{u\W^2}{2v^2}, \frac{u\W}{2\alpha} \right \} \right) $$

We are interested only in small values of the confidence window $\W$, and hence the first term in the minimum expression is of interest to us. For the first term to be less than the second term, we have the following inequality
$$ \W \leq \frac{v^2}{\alpha} $$

Putting the value of confidence window $\W(u, t)$ in this inequality, we get,
$$ u \geq f(t)\log(t) \left( \frac{\alpha}{v^2} \right)^2 $$

Denote the minimum $u$ satisfying this inequality as $u_0$. Hence for all $u>u_0$ we have,

$$ \mathbb{P}\left( \hat{\mu}(i^*, u) + \W(u, t) > \mu^* \right) \leq  \text{exp} \left( \frac{-f(t)\log(t)}{2v^2} \right) $$

Again, since $f(t)$ is a sub-linearly growing function, for all time $t>t_0$, we are guaranteed to have $f(t)>8v^2$, where $t'_0 = f^{-1}(8v^2)$. Substituting this inequality in the above expression yields,

$$ \mathbb{P}\left( \hat{\mu}(i^*, u) + \W(u, t) > \mu^* \right) \leq  \text{exp} \left( -4\log(t) \right) = t^{-4} $$

This expression establishes a distribution oblivious inequality for subexponential random variables. This inequality is valid for all time instances $t>t_0$ and $u > u_0$, where $t_0$ is a distribution dependent constant parameter while $u_0$ depends on the distribution as well as the choice of $f$. In addition, $u_0$ is an increasing function.

This inequality is useful in establishing an upper bound on the probability of events $E_1$ and $E_2$, similar to case 1. We have,

\begin{align*}
\mathbb{P}(E_1) &\leq \mathbb{P}(\exists u \in [t]: \U(i^*, u, t) \leq \mu^*) \leq t.t^{-4} = t^{-3} ~~ \text{by union bound over } u
\end{align*}

Similarly, $\mathbb{P}(E_2) \leq t^{-3}$.

We define $u'_i$ as done in the previous two cases. However, for the above distribution oblivious concentration inequality to hold, we have an additional constraint of $u>u_0$. Hence, in this case we choose $u_i = \max(u'_i, u_0, t_0)$.

Similar to the previous two cases, we get,

$$\mathbb{E}[T_i(t)] \leq u_i + 4 \quad \text{ but here }~~ u_i = \max \left \{ \frac{4f(t)\log(t)}{\Delta_i^2}, f(t)\log(t) \left( \frac{\alpha}{v^2} \right)^2, t_0 \right \}$$

However, we observe that $t_0$ is a constant and thus the first two terms ($u'_i, u_0$) will be more than $t_0$ after a time instance, say $t_1$. Hence,
$$ \mathbb{E}[T_i(t)] \leq \max\left\{\frac{4f(t)\log(t)}{\Delta_i^2}, f(t)\log(t)\left(\frac{\alpha}{v^2}\right)^2\right \} + 4 \quad \forall t > t_{min}(\nu) $$

where the instance dependent threshold $t_{min}(\nu) = \max(t_0, t_1)$.

Thus, we get the regret upper bound as 
$$  \boxed{R_t(\nu) \leq \sum_{i:\Delta_i>0} \left( f(t)\log(t) ~ \max \left\{ \frac{4}{\Delta_i}, \Delta_i \left( \frac{\alpha}{v^2} \right)^2 \right\} + 4\Delta_i \right) \quad \forall t>t'_{min}(\nu)}$$  
}

\end{proof}

\subsection{Regret bounds when $t<t_{min}$}
\label{ssec:suboptimal-bound}
We discuss a weaker regret bound for time instances less than the threshold time $t_{min}$. In the proof of theorem \ref{thm:sr-ucb-1} above, we use a slow increasing scaling function to make the inequality oblivious to its parameters. However, we are also interested in obtaining a regret bound for $t<t_{min}$. We have,
$$ \mathbb{P}\left( \hat{\mu}(i^*, u) + \W(u, t) > \mu^* \right) \leq  \text{exp} \left( -\hat{c} f(t)\log(t) \right) $$
where 
\[
    \hat{c} = 
\begin{cases}
    \frac{2}{(b-a)^2},& \text{if } \nu \in \B^k\\
    \frac{1}{2\sigma^2},& \text{if } \nu \in \SG^k\\
    \frac{1}{2v^2}, &\text{if } \nu \in \SE^k
\end{cases}
\]

Substituting this weaker concentration bound in the above proof of regret bound we get,
$$ \mathbb{E}[T_i(t)] \leq u_i + \sum_{s=u_i+1}^{t} t^{1-\hat{c}f(t)\log(t)} $$ as the expected number of times a sub-optimal arm is pulled. The above expression for $\mathbb{E}[T_i(t)]$ still yields a \emph{sub-linear} upper bound, though weaker than before.

\section{Proof of Theorem \ref{thm:sr-ucb-2} - Regret Upper Bound for R-UCB-G}
\label{proof:sr-ucb-2}

We prove theorem \ref{thm:sr-ucb-2} in this section. This proof is similar to proof of theorem \ref{thm:sr-ucb-1} given in appendix \ref{proof-sr-ucb-1}.

\begin{proof}
We define the following three events for any sub-optimal arm $i$.

\begin{align*}
	E_1~&: \qquad \U(i^*, T_{i^*}(t-1), t) \leq \mu^*\\
	E_2&: \qquad \hat{\mu}(i, T_{i}(t-1), t) > \mu_i + \W(T_{i}(t-1), t)\\
	E_3&: \qquad \Delta_i < 2\W(T_{i}(t-1), t)
\end{align*}

where $T_i(t)$ denotes the number of times $i^{th}$ arm is pulled till time instant $t$. The three events can be interpreted as follows. Event $E_1$ occurs when the upper confidence bound corresponding to the optimal arm is less than its actual mean. Event $E_2$ corresponds to the case when the mean estimator of a sub-optimal arm is much more than its actual mean. As we shall see, both $E_1$ and $E_2$ are low-probability event and its probability can be upper bounded. Finally, event $E_3$ corresponds to the case when the confidence window of arm $i$ is large. We now prove that one of these event must be true when a sub-optimal arm is chosen at time instant $t$. Denote $I_t$ as the arm chosen at time $t$.

\emph{Claim} ~~ If $I_t = i$, then one of $E_1, E_2$ or $E_3$ is true.

To justify this claim, we assume all the three events to be false and then show a contradiction.

We have,
\begin{align*}
	\U(i^*, T_{i^*}(t-1), t) &> \mu^*\\
	&= \mu_i + \Delta_i\\
	&\geq \mu_i + 2\W(T_{i}(t-1), t)\\
	&\geq \hat{\mu}(i, T_{i}(t-1), t) + \W(T_{i}(t-1), t)\\
	&= \U(i, T_{i}(t-1), t)
\end{align*}
which is a contradiction since $I_t \neq i^*$.

Now, by our choice of algorithm

$$ \hat{\mu}(i, u, t) = \frac{1}{u} \sum_{j=1}^{u} X_j \mathbbm{1}_{ \left \{ |X_j| \leq f(t) \right \} }$$

We attempt to establish a distribution oblivious concentration inequality with mean estimator chosen as $\hat{\mu}(i, u, t)$. We draw inspiration from already established non-oblivious concentration inequality based on this mean estimator (see Lemma 1 in \citet*{bubeck}, Lemma 1 in \citet*{Yu2018PureEO} which uses results from \citet*{selden2012}).

We assume the underlying instance to be in $G(\epsilon, B)^k$. For a truncation parameter $f(t)$, we have, with a probability at least $1-t^{-4}$

\begin{align*}
\mu - \hat{\mu}(i, u, t) &\leq \frac{B}{f(t)^{\epsilon}} + \frac{1}{u} \left( 2f(t)\log(2t^4) + u \frac{B}{2f(t)^{\epsilon}} \right)\\
&\leq \frac{3B}{2f(t)^{\epsilon}} + \frac{16f(t)\log(t)}{u}
\end{align*}

Now, the only non-obliviousness is due to the first term. We observe that, for all $t>t_0$, $3B\log(f(t)) < 2f(t)^{\epsilon}$. There always exists $t_0$ such that this is true, since, left hand side is a sub-linear term, while right hand side is not.

For all $t>t_0$, with a probability at least $1-t^{-4}$

$$\mu - \hat{\mu}(i, u, t) \leq \frac{1}{\log(f(t))} + \frac{16f(t)\log(t)}{u}$$

$$\Rightarrow \mathbb{P}\left( \mu - \hat{\mu}(i, u, t) \geq \W(u, t) \right) \leq t^{-4}$$

This expression establishes a distribution oblivious inequality for a general (even heavy-tailed) random variables. This inequality is valid for all time instances $t>t_0$, where $t_0$ is a distribution dependent constant parameter.

This inequality is useful in establishing an upper bound on the probability of events $E_1$ and $E_2$, similar to case 1 in the proof given in Appendix \ref{proof-sr-ucb-1}. We have,

\begin{align*}
\mathbb{P}(E_1) &\leq \mathbb{P}(\exists u \in [t]: \U(i^*, u, t) \leq \mu^*) \leq t.t^{-4} = t^{-3} ~~ \text{by union bound over } u
\end{align*}

Similarly, $\mathbb{P}(E_2) < t^{-3}$.

Now, we proceed to obtain regret upper bound similar to case 1 in the proof given in Appendix \ref{proof-sr-ucb-1}. We define $u'_i$ as the maximum value of $T_i(t-1)$ for which event $E_3$ is true. Also, we wish to apply concentration bound for all time instants $t>u_i$. Consequently, we choose $u_i = \max(u'_i, t_0)$.

Similar to the previous case, we get,

$$\mathbb{E}[T_i(t)] \leq u_i + 4$$

The value of $u_i$ can be evaluated from the inequality given in event $E_3$ and the choice of $\W(u, t)$. We get,

$$u_i = \max\left\{ \frac{32f(t)\log(t)}{\Delta_i - \frac{2}{\log(f(t))}}, t_0 \right\}$$.

However, the above calculated value of $u'_i$ is valid only when

$$\Delta_i - \frac{2}{\log(f(t))} > 0$$

Let $t_1$ denote the minimum value of $t$ satisfying the equation above. Moreover, we observe that $t_0$ is a constant and thus the first term in the expression of $u_i$ will be more than $t_0$ after a time instance, say $t_2$. Hence,

$$\mathbb{E}[T_i(t)] \leq \frac{32f(t)\log(t)}{\Delta_i - \frac{2}{\log(f(t))}} \quad \forall t > t_{min}(\nu)$$

where the instance dependent threshold $t_{min} = \max(t_0, t_1, t_2)$.

Thus, we get the regret upper bound as 

$$ \boxed{ R_t(\nu) \leq \sum_{i:\Delta_i>0} \left( \frac{32f(t)\log(t)}{1 - \frac{2}{\Delta_i \log(f(t))}} + 4\Delta_i  \right) \quad \forall t>t_{min}(\nu)} $$

\end{proof}

%
%

\section{Robust Upper Confidence Bound algorithm for arbitrary instances using Median of Means (MoM) estimator}
\label{mom-section}

Similar to R-UCB-G algorithm, we present yet another statistically robust algorithm over $\G^k$. Instead of truncation-based estimator, we use median of means estimator (see \citet{bubeck}) . This estimator works well under excessive variability in the sample values. The mean estimator in MoM works as follows. The samples are first divided into $q$ bins each having equal number of samples. Empirical mean is calculated for each of the bins and the median of $q$ mean values is the mean estimator of the samples. In truncation-based estimator, high sample values will require high truncation value in order to contribute to the mean estimator. For such excessive variable samples, the proposed algorithm, R-UCB-G-MoM will have slightly better finite horizon performance.

\begin{algorithm}[t]
  \caption{R-UCB-G-MoM}
  \label{mom-algo}
  \textbf{Input} $k$ arms, slow growing scaling function $f$, slowly decaying function $g$
  \begin{algorithmic}
     \State \textbf{Initialize} $\mathcal{R}_i = \{~\}$, $u_i=0$ for all arm $i$
    \For  {$t=1$ to $k$}
    \State pull arm with index $i = t-1$ and observe reward $R_t$
    \State Append $R_t$ to $\mathcal{R}_i$ and update $u_i \leftarrow u_i + 1$
    \EndFor 
    \For { $t=k+1, k+2, \dots$ }
    \State Calculate mean estimator $\hat{\mu}(i, u, t)$ using algorithm \ref{mom-function} with input $\mathcal{R}_i$, $u_i$ and $t$
    \State Calculate the upper confidence bound as $$\U(i, u_i, t) = \hat{\mu}(i, u, t) + \underbrace{f(t)\left(\frac{32\log(t)}{u}\right)^{g(t)}}_{\W(u_i, t)} $$
	\State Pull arm $i$ maximizing $\U(i, u_i, t)$ and observe reward $R_t$
	\State Append $R_t$ to $\mathcal{R}_i$ and update $u_i \leftarrow u_i+1$
    \EndFor  
  \end{algorithmic}
\end{algorithm}

\begin{algorithm}[t]
  \caption{Function to Calculate Median of Means (MoM)}
  \label{mom-function}
   \textbf{Input} $\mathcal{R}$, $u$, $t$.
  \begin{algorithmic}
    \If {$u>32\log(t)$}:
      \State Take $q = \ceil{32\log(t)}$ and $N = \ceil{\frac{u}{q}}$
      \State Compute $\hat{\mu}_l = \frac{1}{N} \sum_{m=1}^{N} R_{\{(l-1)N + m\}}$ for $l=1,2,\dots,q$
      \State \Return median($\hat{\mu}_1, \hat{\mu}_2, \dots, \hat{\mu}_q$)
    \Else:
      \State \Return median($\mathcal{R}$)
    \EndIf
  \end{algorithmic}
\end{algorithm}

In addition to scaling function $f$ put down in Definition \ref{ref:f}, we need another class of functions in this algorithm, which is stated as follows.

\begin{definition}
  A function~$g:\mathbb{N} \ra (0,\infty)$ is said to be \emph{slow
    decaying} if
  $$g(t + 1) \leq g(t)\ \forall\ t \in \N, \quad
  \lim_{t \ra \infty} g(t) = 0, \quad \lim_{t \ra \infty}
  \frac{g(t)}{t^a} = 0 \ \forall \ a > 0.$$
\end{definition}

R-UCB-G-MoM provides the following regret guarantee over instances in $\G^{k}$.

\begin{theorem}
  \label{thm:sr-ucb-3}
  Consider the algorithm R-UCB-G-MoM with a specified slow growing scaling
  function~$f$ and slow decaying function~$g$. For an instance $\nu \in \mathcal{G}(\epsilon,B)^k,$
  there exists a threshold $t_{min}(\epsilon,B)$ such that for
  $t > t_{min}(\epsilon,B),$ the regret under R-UCB-G-MoM satisfies
  $$R_t(\nu) \leq \sum_{i:\Delta_i>0} \left( \Delta_i \left(\frac{2f(t)}{\Delta_i} \right)^{\frac{1}{g(t)}}32\log(t)  + 4\Delta_i \right).$$
\end{theorem}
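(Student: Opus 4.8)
The plan is to follow the same template as the proofs of Theorems~\ref{thm:sr-ucb-1} and~\ref{thm:sr-ucb-2}, now with $\hat{\mu}(i,u,t)$ denoting the Median-of-Means estimator computed by Algorithm~\ref{mom-function} and confidence width $\W(u,t) = f(t)\left(32\log(t)/u\right)^{g(t)}$. First I would define, for any suboptimal arm~$i$, the three events
\begin{align*}
  E_1 &: \quad \U(i^*, T_{i^*}(t-1), t) \leq \mu^*, \\
  E_2 &: \quad \hat{\mu}(i, T_i(t-1), t) > \mu_i + \W(T_i(t-1), t), \\
  E_3 &: \quad \Delta_i < 2\W(T_i(t-1), t),
\end{align*}
and reproduce the contradiction argument showing that if $I_t = i$ then at least one of $E_1, E_2, E_3$ holds. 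This step is identical to the earlier proofs, since it uses only the additive structure $\U = \hat{\mu} + \W$ and not the particular estimator.

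The crux, and the main obstacle, is establishing a distribution-oblivious concentration inequality for the MoM estimator, namely $\mathbb{P}(\hat{\mu}(i,u,t) - \mu_i \geq \W(u,t)) \leq t^{-4}$ for all $t$ beyond an instance-dependent threshold $t_0$. I would start from the standard (non-oblivious) MoM guarantee (see \citet{bubeck}): for a distribution with central $(1+\epsilon)$-moment bounded by some $B'$, which is implied by $\nu_i \in \G(\epsilon, B)$ up to a constant factor depending on $\epsilon, B$, each of the $q = \lceil 32\log(t)\rceil$ bin means (each an average of $N = \lceil u/q\rceil$ samples) deviates from $\mu_i$ by more than $(12B')^{1/(1+\epsilon)}(1/N)^{\epsilon/(1+\epsilon)}$ with probability at most $1/4$, via a moment inequality followed by Markov; a Chernoff bound on the binomial count of deviating bins then shows the median deviates with probability at most $e^{-q/8} \leq t^{-4}$, which is precisely why the algorithm sets $q = \lceil 32\log(t)\rceil$. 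To make the bound oblivious, I would note that since $1/N \leq 1$ eventually, the clean width $(12B')^{1/(1+\epsilon)}(1/N)^{\epsilon/(1+\epsilon)}$ is dominated by $f(t)(1/N)^{g(t)}$ as soon as $g(t) \leq \epsilon/(1+\epsilon)$ and $f(t) \geq (12B')^{1/(1+\epsilon)}$; because $g$ is slow decaying and $f$ is slow growing, both inequalities hold for all $t$ past some $t_0(\epsilon,B)$. Since $1/N$ is of order $32\log(t)/u$, the algorithm's width $\W(u,t)$ then dominates $f(t)(1/N)^{g(t)}$, and the claimed tail bound follows; the symmetric lower-tail version needed for $E_1$ is obtained the same way.

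With the concentration bound in hand, the remainder is routine and mirrors Appendix~\ref{proof:sr-ucb-2}. A union bound over $u \in [t]$ gives $\mathbb{P}(E_1), \mathbb{P}(E_2) \leq t^{-3}$, and the standard decomposition yields $\mathbb{E}[T_i(t)] \leq u_i + \sum_{s} \mathbb{P}(E_1 \cup E_2) \leq u_i + 4$, where $u_i$ is the largest number of pulls for which $E_3$ can hold. Solving $\Delta_i = 2\W(u,t) = 2f(t)\left(32\log(t)/u\right)^{g(t)}$ for $u$ gives $u_i' = 32\log(t)\,(2f(t)/\Delta_i)^{1/g(t)}$; taking $u_i = \max(u_i', t_0)$ and noting that the first term dominates the constant $t_0$ for $t$ beyond some $t_2$, I would set $t_{min}(\epsilon,B) = \max(t_0, t_2)$ and conclude $\mathbb{E}[T_i(t)] \leq 32\log(t)(2f(t)/\Delta_i)^{1/g(t)} + 4$ for $t > t_{min}$. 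Multiplying by $\Delta_i$ and summing over suboptimal arms gives exactly the stated regret bound. The minor ceiling/floor bookkeeping in $q$ and $N$ only affects constants and the threshold $t_{min}$, and does not alter the leading expression.
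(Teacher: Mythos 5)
Your proposal is correct and follows essentially the same route as the paper's proof: the identical three-event decomposition and contradiction argument, a per-bin deviation probability at most $1/4$ once $g(t)\leq \epsilon/(1+\epsilon)$ and $f(t)$ exceeds the relevant moment constant, a Hoeffding/Chernoff bound on the binomial count of deviating bins with $q=\lceil 32\log(t)\rceil$ yielding the $t^{-4}$ tail, and the same union bound, $\mathbb{E}[T_i(t)]\leq u_i+4$ decomposition, and solution of $\Delta_i = 2\W(u,t)$ for $u_i$ with the constant $t_0$ and the $u>32\log(t)$ constraint absorbed into $t_{min}$. The only differences are presentational: you dominate the clean non-oblivious MoM width by the algorithm's width, whereas the paper substitutes $\W$ directly into the per-bin bound $p\leq 3B/(N^{\epsilon}\W^{1+\epsilon})$ and shows $p\leq 1/4$; and your explicit remark on passing from the raw $(1+\epsilon)$-moment bound of $\G(\epsilon,B)$ to a central moment bound is a point the paper glosses over.
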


The proof is similar to proof of theorem \ref{thm:sr-ucb-1} presented in appendix \ref{proof-sr-ucb-1}.

\begin{proof} We define the following three events for any sub-optimal arm $i$.

\begin{align*}
	E_1~&: \qquad \U(i^*, T_{i^*}(t-1), t) \leq \mu^*\\
	E_2&: \qquad \hat{\mu}(i, T_{i}(t-1), t) > \mu_i + \W(T_{i}(t-1), t)\\
	E_3&: \qquad \Delta_i < 2\W(T_{i}(t-1), t)
\end{align*}

where $T_i(t)$ denotes the number of times $i^{th}$ arm is pulled till time instant $t$. The three events can be interpreted as follows. Event $E_1$ occurs when the upper confidence bound corresponding to the optimal arm is less than its actual mean. Event $E_2$ corresponds to the case when the mean estimator of a sub-optimal arm is much more than its actual mean. As we shall see, both $E_1$ and $E_2$ are low-probability event and its probability can be upper bounded. Finally, event $E_3$ corresponds to the case when the confidence window of arm $i$ is large. We now prove that one of these event must be true when a sub-optimal arm is chosen at time instant $t$. Denote $I_t$ as the arm chosen at time $t$.

\emph{Claim} ~~ If $I_t = i$, then one of $E_1, E_2$ or $E_3$ is true.

To justify this claim, we assume all the three events to be false and then show a contradiction.

We have,
\begin{align*}
	\U(i^*, T_{i^*}(t-1), t) &> \mu^*\\
	&= \mu_i + \Delta_i\\
	&\geq \mu_i + 2\W(T_{i}(t-1), t)\\
	&\geq \hat{\mu}(i, T_{i}(t-1), t) + \W(T_{i}(t-1), t)\\
	&= \U(i, T_{i}(t-1), t)
\end{align*}
which is a contradiction since $I_t \neq i^*$.

Now, by our choice of algorithm $\hat{\mu}(i, u, t)$ is the \emph{median of means} estimator. In this mean estimator, we first divide the samples into $q$ bins, and compute the average of all the bins. Each bin will have $N = \ceil{\frac{u}{q}}$ samples. We return the median of these $q$ bins as the mean estimator. We attempt to establish a distribution oblivious concentration inequality for this mean estimator. Formally, this estimator is defined as
$$ \hat{\mu}(i, u, t) = \text{median}(\hat{\mu}_1, \hat{\mu}_2, \dots, \hat{\mu}_q) \quad \text{where } q = \ceil{32\log(t)} ~~ \text{ and } ~~ \hat{\mu}_l = \frac{1}{N} \sum_{m=1}^N X_{\{(l-1)N+m\}}  $$

The choice of $q = \ceil{32\log(t)}$ is useful in establishing the required concentration inequality. This requirement comes from the fact that we need at least $N=1$ samples per bin. Further, we assume that for all arms, $u > 32\log(t)$. Hence, the inequality that we now propose is valid only for $u > 32 \log(t)$.

We define a bernoulli random variable $Y_l = \mathbbm{1}\{ \hat{\mu}_l > \mu + \W \}$. According to equation 12 in \citet*{bubeck}, $Y_l$ has the parameter
$$ p \leq \frac{3B}{N^{\epsilon}\W^{1+\epsilon}} $$ 

Choosing $\W(u, t) = f(t) \left( \frac{1}{N} \right)^{g(t)}$, where $f(t)$ is a slow growing function, and $g(t)$ is a slow decaying function, yields,

$$p \leq \frac{3B}{N^{\epsilon}f(t)^{1+\epsilon}\left(\frac{1}{N}\right)^{g(t)(1+\epsilon)}}$$

Since $f(t)$ is slow growing and $g(t)$ is slow decaying, we are guaranteed to have a $t_0$ such that, for all $t>t_0$, we have $g(t)<\frac{\epsilon}{1+\epsilon}$ and $f(t)^{1+\epsilon} > 12B$. For such $t>t_0$, we get,
\begin{align*}
    p &\leq \left( \frac{1}{4} \right) \left( \frac{3B}{12f(t)^{1+\epsilon}} \right) \left( \frac{1}{N^{\epsilon - g(t)(1+\epsilon)}} \right) \leq \frac{1}{4} 
\end{align*}

Finally, using Hoeffding inequality for binomial random variable,

\begin{align*}
\mathbb{P}\left( \hat{\mu}(i, u, t) - \mu > \W(u, t) \right) = \mathbb{P}\left( \sum_{j=1}^{q} X_j \right) &\leq \text{exp}\left( -2q(\frac{1}{2} - p)^2 \right)\\
 &\leq \text{exp}\left(\frac{-q}{8}\right) = \text{exp}\left( \frac{-32\log(t)}{8}\right) = t^{-4}
\end{align*}

Note that this inequality is valid for all time instances $t>t_0$ and $u>u_0$ where $t_0$ is a distribution dependent constant parameter and $u_0 = \ceil{32\log(t)}$, an increasing function.

This inequality is useful in establishing an upper bound on the probability of events $E_1$ and $E_2$, similar to case 1. We have,

\begin{align*}
\mathbb{P}(E_1) &\leq \mathbb{P}(\exists u \in [t]: \U(i^*, u, t) \leq \mu^*) \leq t.t^{-4} = t^{-3} ~~ \text{by union bound over } u
\end{align*}

Similarly, $\mathbb{P}(E_2) \leq t^{-3}$.

We define $u'_i$ as done in the the proof of theorem \ref{thm:sr-ucb-1}. However, for the above distribution oblivious concentration inequality to hold, we have an additional constraint of $u>u_0$. Hence, in this case we choose $u_i = \max(u'_i, u_0, t_0)$.

Similar to the previous two cases, we get,

$$\mathbb{E}[T_i(t)] \leq u_i + 4 \quad \text{ but here }~~ u_i = \max \left \{ \left(\frac{2f(t)}{\Delta_i} \right)^{\frac{1}{g(t)}}32\log(t), 32\log(t), t_0 \right \}$$

However, we observe that $t_0$ is a constant and thus the first two terms ($u'_i, u_0$) will be more than $t_0$ after a time instance, say $t'_1$. Moreover, the first function is faster growing than the second function, since $\left( \frac{2f(t)}{\Delta_i} \right)^{\frac{1}{g(t)}}$ is increasing with time instance $t$. Denote $t''_1$ as the threshold time. Define $t_1 = \max(t'_1. t''_1)$. Hence,
$$ \mathbb{E}[T_i(t)] \leq  \left(\frac{2f(t)}{\Delta_i} \right)^{\frac{1}{g(t)}}32\log(t) + 4 \quad \forall t > t_{min}(\nu) $$

where the instance dependent threshold $t_{min}(\nu) = \max(t_0, t_1)$.

Thus, we get the regret upper bound as 
$$  \boxed{R_t(\nu) \leq \sum_{i:\Delta_i>0} \left( \Delta_i \left(\frac{2f(t)}{\Delta_i} \right)^{\frac{1}{g(t)}}32\log(t)  + 4\Delta_i \right) \quad \forall t>t_{min}(\nu)}$$  

It is left to show that the above regret bound is indeed \emph{consistent}. We show that there exists appropriate choices of $f(t)$ and $g(t)$ so that the overall regret expression can be made as close to logarithmic as we want.
\end{proof}
\begin{corollary}
\label{consistency-tea}
For every slow increasing function $\Phi(t)$, there exists slow increasing function $f(t)$, slow decreasing decreasing $g(t)$ and $t_{min}$ such that $\forall ~ \Delta_i, t>t_{min}$
\begin{align*}
    \left( \frac{2f(t)}{\Delta_i} \right)^{\frac{1}{g(t)}} \leq \Phi(t) 
\end{align*}
\end{corollary}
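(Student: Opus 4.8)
The plan is to exhibit an explicit, $\Delta_i$-independent pair $(f,g)$ built from $\Phi$ together with a free exponent parameter $c \in (0,1)$, and then verify the inequality by a ``split-the-exponent'' calculation. Concretely, I would set
$$f(t) = \exp\left(\tfrac{1}{2}(\log\Phi(t))^{1-c}\right), \qquad g(t) = (\log\Phi(t))^{-c},$$
defined for all $t$ large enough that $\Phi(t) > 1$ (which holds eventually, since $\Phi$ slow increasing forces $\Phi(t)\to\infty$). Crucially, $f$ and $g$ depend only on $\Phi$ and not on any gap $\Delta_i$; only the threshold $t_{min}$ will depend on the gaps.

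First I would check admissibility, i.e., that $f$ is slow growing and $g$ is slow decaying. Since $\Phi(t)\to\infty$ we have $\log\Phi(t)\to\infty$, so $f(t)\to\infty$ and $g(t)\to 0$; monotonicity of $f$ (increasing) and $g$ (decreasing) follows from that of $t\mapsto\log\Phi(t)$ for $t$ large. For the subpolynomial conditions, the key observation is that $\Phi$ slow increasing forces $\log\Phi(t) = o(\log t)$; consequently $(\log\Phi(t))^{1-c} = o(\log\Phi(t)) = o(\log t)$ (using $1-c<1$), so $\log f(t) = o(\log t)$ and hence $f(t)/t^a\to 0$ for every $a>0$. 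For $g$ the subpolynomial condition is immediate, as $g(t)\to 0$ while $t^a\to\infty$.

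Next comes the core verification. Taking logarithms, the claim $\left(\tfrac{2f(t)}{\Delta_i}\right)^{1/g(t)} \le \Phi(t)$ is equivalent to $\log\!\left(\tfrac{2f(t)}{\Delta_i}\right) \le g(t)\log\Phi(t)$. For a \emph{fixed} gap $\Delta_i>0$ the constant $\tfrac{2}{\Delta_i}$ is absorbed as follows: since $f(t)\to\infty$, there is a threshold $t_0(\Delta_i)$ with $\tfrac{2}{\Delta_i} \le f(t)$ for all $t>t_0(\Delta_i)$, whence $\tfrac{2f(t)}{\Delta_i} \le f(t)^2 = \exp\!\left((\log\Phi(t))^{1-c}\right)$. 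Because $1/g(t) = (\log\Phi(t))^{c} > 0$, raising both sides to the power $1/g(t)$ (monotone on $(0,\infty)$) gives
$$\left(\frac{2f(t)}{\Delta_i}\right)^{1/g(t)} \le \exp\!\left((\log\Phi(t))^{1-c}\cdot(\log\Phi(t))^{c}\right) = \exp(\log\Phi(t)) = \Phi(t),$$
which is exactly the desired bound. The deliberate half-exponent $\tfrac12(\log\Phi(t))^{1-c}$ in $f$ is precisely what leaves room to absorb $\tfrac{2}{\Delta_i}$ while still recombining the exponents to $(1-c)+c=1$.

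Finally, to handle ``$\forall\,\Delta_i$'' I would invoke that a fixed MAB instance has only finitely many suboptimal arms: setting $t_{min} = \max_{i:\Delta_i>0} t_0(\Delta_i)$ (enlarged if necessary so that $\Phi(t)>1$ and $f,g$ are monotone beyond it) makes the inequality hold simultaneously for every gap. I expect the only genuinely delicate step to be confirming the subpolynomial growth of $f$; everything hinges on the implication ``$\Phi$ slow increasing $\Rightarrow \log\Phi(t)=o(\log t)$,'' after which the exponents $1-c$ and $c$ do the rest. Choosing $c$ closer to $1$ makes $g$ decay faster and $f$ grow slower, trading off the size of $t_{min}$ against how quickly the bound takes effect.
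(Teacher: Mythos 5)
Your proposal is correct and takes essentially the same route as the paper's proof: the same choices $f(t)=e^{\frac{1}{2}(\log\Phi(t))^{1-c}}$ and $g(t)=(\log\Phi(t))^{-c}$ (the paper merely adds a prefactor $0.5$ to $f$ to absorb the factor $2$, while you absorb $2/\Delta_i$ using $f(t)\to\infty$), followed by the identical exponent recombination $(1-c)+c=1$ yielding $\Phi(t)$. Your extra details---verifying via $\log\Phi(t)=o(\log t)$ that $f$ is genuinely slow growing, and taking the maximum of $t_0(\Delta_i)$ over the finitely many suboptimal arms to get a single $t_{min}$---are minor tightenings of steps the paper leaves implicit, not a different argument.
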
 
\begin{proof} We see that $e^{0.5(\log\Phi(t))^{1-c}}$ is an increasing function for $c \in (0, 1)$. Hence, we choose $f(t) = 0.5e^{0.5(\log\Phi(t))^{1-c}}$ and $g(t) = \frac{1}{\log^{c}(\Phi(t))}$. Also there exists $t_0$ such that for all $t>t_0$, $\frac{1}{\Delta_i} \leq e^{0.5(\log\Phi(t))^{1-c}}$ since LHS is a constant while RHS is an increasing function of $t$. Thus, we have,
\begin{align*}
    \frac{f(t)}{\Delta_i} \leq e^{(\log\Phi(t))^{1-c}} \quad \forall t > t_0
\end{align*}
Again, there exists $t_1$ such that LHS (and hence RHS) is greater than 1.

Finally for all $t>t_{min}$, where $t_{min} = \max (t_0, t_1)$, we have,
\begin{align*}
    \left( \frac{f(t)}{\Delta_i} \right)^{\frac{1}{g(t)}} \leq \left(e^{(\log\Phi(t))^{1-c}}\right)^{\log^{c}(\Phi(t))} = \Phi(t) \quad \forall t>t_{min}
\end{align*}
\end{proof}

\newpage

\end{appendices}

\end{document}